\documentclass[10pt]{article} % For LaTeX2e
\usepackage[accepted]{tmlr}
% If accepted, instead use the following line for the camera-ready submission:
%\usepackage[accepted]{tmlr}
% To de-anonymize and remove mentions to TMLR (for example for posting to preprint servers), instead use the following:
%\usepackage[preprint]{tmlr}

% Optional math commands from https://github.com/goodfeli/dlbook_notation.
%%%%% NEW MATH DEFINITIONS %%%%%

\usepackage{amsmath,amsfonts,bm}

% Mark sections of captions for referring to divisions of figures

% Highlight a newly defined term

% Figure reference, lower-case.

% Figure reference, capital. For start of sentence

% Section reference, lower-case.

% Section reference, capital.

% Reference to two sections.

% Reference to three sections.

% Reference to an equation, lower-case.
\def\eqref#1{equation~\ref{#1}}
% Reference to an equation, upper case

% A raw reference to an equation---avoid using if possible

% Reference to a chapter, lower-case.

% Reference to an equation, upper case.

% Reference to a range of chapters

% Reference to an algorithm, lower-case.

% Reference to an algorithm, upper case.

% Reference to a part, lower case

% Reference to a part, upper case

\def\1{\bm{1}}

% Random variables

% rm is already a command, just don't name any random variables m

% Random vectors

% Elements of random vectors

% Random matrices

% Elements of random matrices

% Vectors

% Elements of vectors

% Matrix

% Tensor
\DeclareMathAlphabet{\mathsfit}{\encodingdefault}{\sfdefault}{m}{sl}
\SetMathAlphabet{\mathsfit}{bold}{\encodingdefault}{\sfdefault}{bx}{n}

% Graph

% Sets

% Don't use a set called E, because this would be the same as our symbol
% for expectation.

% Entries of a matrix

% entries of a tensor
% Same font as tensor, without \bm wrapper

% The true underlying data generating distribution

% The empirical distribution defined by the training set

% The model distribution

% Stochastic autoencoder distributions

 % Laplace distribution

% Wolfram Mathworld says $L^2$ is for function spaces and $\ell^2$ is for vectors
% But then they seem to use $L^2$ for vectors throughout the site, and so does
% wikipedia.

 % See usage in notation.tex. Chosen to match Daphne's book.

\usepackage{hyperref}
\usepackage{url}

% Ramneet's added packages
\usepackage{algorithm}
\usepackage{amssymb}
\usepackage{bbm}
\usepackage{xcolor}
\usepackage{adjustbox}
\usepackage{caption}
\usepackage{subfigure}
\usepackage{amssymb}% http://ctan.org/pkg/amssymb
\usepackage{pifont}% http://ctan.org/pkg/pifont
\newcommand{\cmark}{\ding{51}}%
\newcommand{\xmark}{\ding{55}}%
\usepackage{amsthm}
\newtheorem{theorem}{Theorem}

\newtheorem{lemma}{Lemma}

\newtheorem{definition}{Definition}
\usepackage{multirow}
\usepackage{paralist}
\usepackage{natbib}
\usepackage{soul}
\usepackage{wrapfig}
\usepackage{enumitem}

% save the meaning of \AND and undefine it to keep algorithmic happy

% \let\algoAND\AND
% \let\AND\classAND
% load algorithmic
\usepackage{algorithmic}
% save the new meaning of \AND and restore the one of the class
\usepackage{algorithm}
% but when we start \begin{algorithmic} we want its own \AND
% \AtBeginEnvironment{algorithmic}{\let\AND\algoAND}
% \floatname{algorithm}{Algorithm}
% \renewcommand{\algorithmicrequire}{\textbf{Input:}}
% \renewcommand{\algorithmicensure}{\textbf{Output:}}
% \renewcommand{\algorithmicendif}{\textbf{end}}

\title{CODiT: Conformal Out-of-Distribution Detection in Time-Series Data}

% Authors must not appear in the submitted version. They should be hidden
% as long as the tmlr package is used without the [accepted] or [preprint] options.
% Non-anonymous submissions will be rejected without review.

\author{\name Ramneet Kaur \email ramneetk@seas.upenn.edu \\
      \addr Department of Computer and Information Science\\
      University of Pennsylvania
      \\
      \\
      \name Kaustubh Sridhar \email ksridhar@seas.upenn.edu \\
      \addr Department of Electrical and Systems Engineering \\
      University of Pennsylvania
      \\
      \\
      \name Sangdon Park \email sangdon@gatech.edu\\
      \addr School of Cybersecurity and Privacy \\
      Georgia Institute of Technology
      \\
      \\
      \name Susmit Jha \email susmit.jha@sri.com \\
      \addr Computer Science Laboratory\\
      SRI International
      \\
      \\
      \name Anirban Roy \email anirban.roy@sri.com \\
      \addr Computer Science Laboratory\\
      SRI International
      \\
      \\
      \name Oleg Sokolsky \email sokolsky@cis.upenn.com \\
      \addr Department of Computer and Information Science\\
      University of Pennsylvania
      \\
      \\
      \name Insup Lee \email lee@cis.upenn.com \\
      \addr Department of Computer and Information Science\\
      University of Pennsylvania
      }

% The \author macro works with any number of authors. Use \AND 
% to separate the names and addresses of multiple authors.

% \def\month{MM}  % Insert correct month for camera-ready version
% \def\year{YYYY} % Insert correct year for camera-ready version
% \def\openreview{\url{https://openreview.net/forum?id=XXXX}} % Insert correct link to OpenReview for camera-ready version

\begin{document}

\maketitle

\begin{abstract}
Machine learning models are prone to making incorrect predictions on inputs that are far from the training distribution. This hinders their deployment in safety-critical applications such as autonomous vehicles and healthcare. The detection of a shift from the training distribution of individual datapoints has gained attention. A number of techniques have been proposed for such out-of-distribution (OOD) detection. But in many applications, the inputs to a machine learning model form a temporal sequence. Existing techniques for OOD detection in time-series data either do not exploit temporal relationships in the sequence or do not provide any guarantees on detection. 
% We develop a self-supervised learning approach, CODiT for OOD detection in time-series data with guarantees on detection. 
We propose using deviation from the in-distribution temporal equivariance as the non-conformity measure in conformal anomaly detection framework for OOD detection in time-series data.
% learned by a model from the training data 
% for OOD detection in time-series data. Using the proposed deviation as non-conformity measure in the conformal anomaly detection framework gives us an OOD detector with a bounded false detection rate. 
% Using Fisher's method as an ensemble approach for combining predictions from multiple conformal detectors enhances the detection performance while still preserving guarantees of individual detectors. 
% Computing multiple independent predictions from the proposed measure in the conformal anomaly detection framework and combining these predictions by using Fisher's method leads to the proposed detector CODiT with a bounded false detection rate. 
Computing independent predictions from multiple conformal detectors based on the proposed measure and combining these predictions by Fisher's method leads to the proposed detector CODiT with guarantees on false detection in time-series data. We illustrate the efficacy of CODiT by achieving state-of-the-art results  on computer vision datasets in autonomous driving. We also show that CODiT can be used for OOD detection in non-vision datasets by performing experiments on the physiological GAIT sensory dataset. Code, data, and trained models are available at \url{https://github.com/kaustubhsridhar/time-series-OOD}.
% \url{https://github.com/kaustubhsridhar/time-series-OOD}\textcolor{red}{RK: ToDo make it anonymous.}
\end{abstract}

\section{Introduction}
\label{sec:intro}
Deployment of  machine learning models in safety-critical systems such as autonomous driving~\citep{self-driving-app}, and healthcare~\citep{medical-app} is hindered by uncertainty in the outputs of these models. One such source of uncertainty at the inference time is a shift in the distribution of the model's inputs from their training distribution. Detection of this shift or out-of-distribution (OOD) detection on individual datapoints has therefore gained attention~\citep{baseline,kldiv,mahala,aux,csi,kaur_udl}. But this problem of OOD detection in time-series data is less explored~\citep{vanderbilt, beta-vae, arvind}. In time-series data, OOD detection aims at detecting those windows of time-series datapoints that are outside the training distribution. In this case, the distribution of interest is not just of individual datapoints, but the distribution of sequences in which these datapoints occur in the time-series data. In this paper, we propose {CODiT}, a novel algorithm for OOD detection in time-series data with a bounded false detection rate.

Most of the existing approaches for detection in time-series data are \textit{point-based}, i.e., they independently consider each datapoint in the window, such as individual frames in a video clip. In other words, these approaches do not exploit time-dependency among the datapoints in a window to detect if the window is OOD. 
% An example of an OOD window is the replay window where the camera gets stuck at a single frame and starts generating the same image over and over again.
% % If we look at the single image independently, then it is still in-distribution (iD). 
% % If we look at all the images in this window independently, then we cannot tell if the window is OOD as 
% Fig.~\ref{fig:replay} shows an example of the replay window in the autonomous driving dataset simulated by CARLA~\citep{carla}. We need to consider the sequence of the same image in the replay window to detect the window as OOD. We define these OOD windows as \textit{temporal OODs}, which require considering the temporal sequence of datapoints in the window for detection.
% % We need to consider the sequence of same image in the replay window to detect the window as OOD. We define these OOD windows as \textit{temporal OODs}, 
% % which require looking into the temporal sequence of datapoints in the window for detection.
% % which cannot be detected by looking at datapoints in the window independently.
An example of an OOD window is a car drifting video clip due to slippery ground or loss of control, given normal driving video clips as in-distribution (iD) data. As shown in Fig.~\ref{fig:drift}, we cannot tell from a single frame if the car has drifted since drift is defined based on the relative relation between frames (e.g., a trajectory of a car). We define these OOD windows as \textit{temporal OODs}, which require considering the sequence of datapoints in the window for detection. In contrast to the existing point-based approaches, we propose using time-dependency among the datapoints in a window for detection of temporal OODs. 
% The existing point-based detectors might not be able to detect temporal OODs.
% Another example of temporal OODs is the GAIT of patients with neurodegenerative diseases. With the GAIT of healthy individuals as iD data, the patients' GAIT of neurodegenerative diseases, such as Parkinson's disease (PD), Huntington's disease (HD), and Amyotrophic Lateral Sclerosis (ALS), are examples of the temporal OODs. As shown in Fig.~\ref{fig:gait}, we need a sequence of time-series datapoints to determine whether the walking pattern is from a healthy individual or a patient.
% Detection results on replay dataset (Fig.~\ref{fig:results_replay}) shows that both existing point-based detectors, namely variational autoencoder (VAE) based Cai et al.'s~(\citeyear{vanderbilt}), and $\beta$ VAE-based Ramakrishna et al.'s~(\citeyear{beta-vae}), perform poorly in the detection of these temporal OODs. 
% We, therefore, propose using time-dependency among the datapoints in a window for OOD detection in time-series data. 
Specifically, we propose \textit{using deviation from the iD temporal equivariance}, i.e. equivariance with respect to a set of temporal transformations learned by a model on windows drawn from the training distribution, \textit{for OOD detection in time-series data}. This is because a model trained to learn equivariance with respect to temporal transformations on data drawn from the training distribution might not generalize or exhibit equivariance on OOD samples dissimilar to the training distribution.  

\begin{figure}[!t]
    % \vspace{-1.2cm} 
    \centering
    \includegraphics[width=1\textwidth]{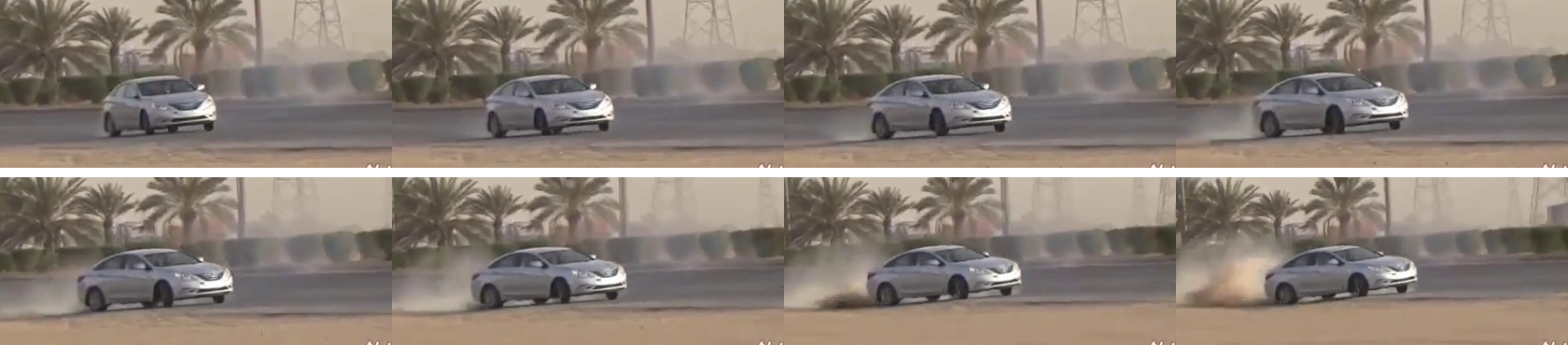}
    \vspace{-0.8cm} 
    \caption{\footnotesize{Drift in cars as temporal OODs. This trace is taken from the drift dataset by \protect\citet{drift}.}}
    \label{fig:drift}
 \end{figure}

\begin{table}[!t]
\caption{\footnotesize{Detection capabilities of OOD detectors in time-series data.}}
\vspace{-0.32cm} 
% \begin{center}
\begin{adjustbox}{width=1\columnwidth,center}
\begin{tabular}{|c|c|c|c|}
\hline
Detector  & False Detection Rate Guarantees & Temporal OODs & Non-vision Datasets \\

% & Guarantees & OODs  &  datasets \\ 

\hline
VAE-based \citep{vanderbilt} &  \cmark & \xmark & \cmark \\

$\beta$ VAE-based \citep{beta-vae}
& \cmark & \xmark & \cmark \\
Optical Flow based \citep{arvind} & \xmark & \cmark & \xmark\\

CODiT (Ours) & \cmark & \cmark & \cmark \\
\hline
\end{tabular}
\end{adjustbox}
\label{tab:det_abilities} 
\vspace{-0.3in}
\end{table}

To bound false detection on the iD data, we leverage inductive conformal anomaly detection (ICAD)~\citep{cp}. ICAD is a general framework for testing if an input conforms to the training distribution by computing quantitative scores defined by a non-conformity measure. A $p$-value, computed by comparing non-conformity score of the input with these scores on the data drawn from training distribution, indicates anomalous behavior of the input. The detection performance of ICAD, however, depends on the choice of the non-conformity measure used in the framework~\citep{cp}. We propose using \textit{deviation from the expected temporal equivariant behavior of a model learned on data drawn from the training distribution} as the non-conformity measure in ICAD for OOD detection in the time-series data.
% \textcolor{blue}{With the iD temporal equivariance learned by a model via the auxiliary task of predicting an applied temporal transformation on windows drawn from the training distribution, we propose using error in this prediction as the non-conformity measure in ICAD for OOD detection in time-series data.} 
 
ICAD computes a single $p$-value of the input to detect its anomalous behavior. To enhance detection performance, we propose using multiple ($n>1$) $p$-values computed from $n$ transformations sampled as independent and identically distributed (IID) variables from a distribution over the set of temporal transformations. The intuition for using multiple transformations is that an OOD window might behave as a transformed iD window with one transformation but the likelihood of this decreases with the number of temporal transformations. Using Fisher's method~\citep{fisher} to combine these $n$ independent $p$-values leads to the proposed detector CODiT for conformal OOD detection in time-series data with a bounded false detection rate. The contributions of this paper can be summarized as:
 
% \SP{I guess our main contributions are (1) detect temporal OODs (due to our choice of a non-conformity measure), (2) false alarm control and good detection rate (due to the combination of ICAD and p-value ensemble---the combination is novel), (3) we propose a tool CODiT by aggregating these ideas, (4) evaluation.}
% \begin{enumerate}[topsep=0pt,leftmargin=*,itemsep=0pt]
{\bf 1. Novel Measure for OOD Detection in Time-Series Data.} To our knowledge, all the existing non-conformity measures for OOD detection are defined on individual datapoints. We propose a measure that is defined on the window containing information about the sequence of time-series datapoints for enhancing the detection of temporal OODs. With a model trained to learn iD temporal equivariance via the auxiliary task of predicting an applied transformation on windows drawn from the training distribution, we propose using error in this prediction as the non-conformity measure in ICAD for OOD detection in time-series data.\\
{\bf 2. Enhanced detection performance.} To enhance the detection performance, we propose to use Fisher's method as an ensemble approach for combining predictions from multiple conformal anomaly detectors based on the proposed measure.\\
{\bf 3. CODiT.} Computing $n$ independent $p$-values of the input from the proposed measure in the ICAD framework, and combining these values by Fisher's method leads to the proposed detector CODiT with a bounded false detection rate.\\
{\bf 4. Evaluation.} For comparison with the point-based approaches, we perform experiments on weather and night OODs on a driving dataset simulated by CARLA~\citep{carla}, achieving state-of-the-art (SOTA) results. We outperform the existing non-point based SOTA~\citep{arvind} on vision temporal OODs. To illustrate that CODiT can be used for OOD detection beyond vision, we also perform experiments and obtain SOTA results on the real physiological GAIT dataset~\citep{gait_dataset}.
% \end{enumerate}

% \begin{figure}[!t]
% \centering
% \begin{minipage}{.25\textwidth}
%   \centering
%   \includegraphics[width=\linewidth]{}
% %   \captionof{figure}{ROC curve on replay OODs}
%   \label{fig:test1}
% \end{minipage}%
% \begin{minipage}{.25\textwidth}
%   \centering
%   \includegraphics[width=\linewidth]{}
% %   \captionof{figure}{TNR ($@ 95\%$ TPR) for replay OODs}
%   \label{fig:test2}
% \end{minipage}
% \caption{ROC curves (left), TNR (right) results on replay OODs.}
% \label{fig:results_replay}
% \end{figure}
\section{Problem statement and Motivation}
\label{sec:moti}

\subsection{Problem Statement}
OOD detection in time-series data takes in a window $X_{t, w}$ of consecutive time-series datapoints $(x_t, x_{t+1},\ldots, x_{t+w-1})$, and labels $X_{t, w}$ as iD or OOD. Here $t$ is the starting time of the window and $w$ is the window length.

\subsection{Motivation of the Proposed OOD Detection Measure}
The existing point-based detectors might not be able to detect temporal OODs. An example of a temporal OOD, as shown in Fig.~\ref{fig:replay}, is the replay window where camera gets stuck at a single frame and starts generating the same image over and over again. We need to consider the sequence of same image in a replay window to detect the window as OOD. Detection results on replay dataset in Fig.~\ref{fig:results_replay} shows that both the existing point-based detectors, namely variational autoencoder (VAE) based Cai et al.'s~(\citeyear{vanderbilt}), and $\beta$ VAE-based Ramakrishna et al.'s~(\citeyear{beta-vae}), perform poorly in the detection of these temporal OODs. We, therefore, propose using time-dependency among the datapoints in a window for OOD detection in time-series data.

To our knowledge, Feng et al.'s~(\citeyear{arvind}) detector is the only existing OOD detector in time-series data that takes into account time-dependency among the individual datapoints in a window. It does so by extracting optical flow information from consecutive frames in a video clip. As shown in the experimental results on temporal OODs (Fig.~\ref{fig:roc_curves_temp_oods}) of Section~\ref{sec:exp}, Feng et al.'s detector can thus be used to detect temporal OODs. However, since this detector depends on the optical flow information, it is restricted to the vision data. In contrast, CODiT can be used to detect temporal OODs across domains without relying on any domain-specific features. 

Table~\ref{tab:det_abilities} compares the detection capabilities of CODiT with the existing OOD detectors in time-series data.

\begin{figure}[!htb]
    % \centering
    \begin{minipage}{.48\textwidth}
        % \centering
        \includegraphics[width=\linewidth]{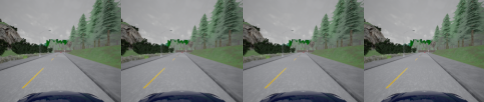}
        \caption{\footnotesize{\textbf{Replay window}: An example of the temporal OOD where camera gets stuck and generates the same image in the entire window. This trace is generated by CARLA, an open-source simulator for autonomous driving research~\citep{carla}.}}
        \label{fig:replay}
    \end{minipage}%
    \hfill
    \begin{minipage}{0.48\textwidth}
        \centering
        \includegraphics[width=0.7\linewidth]{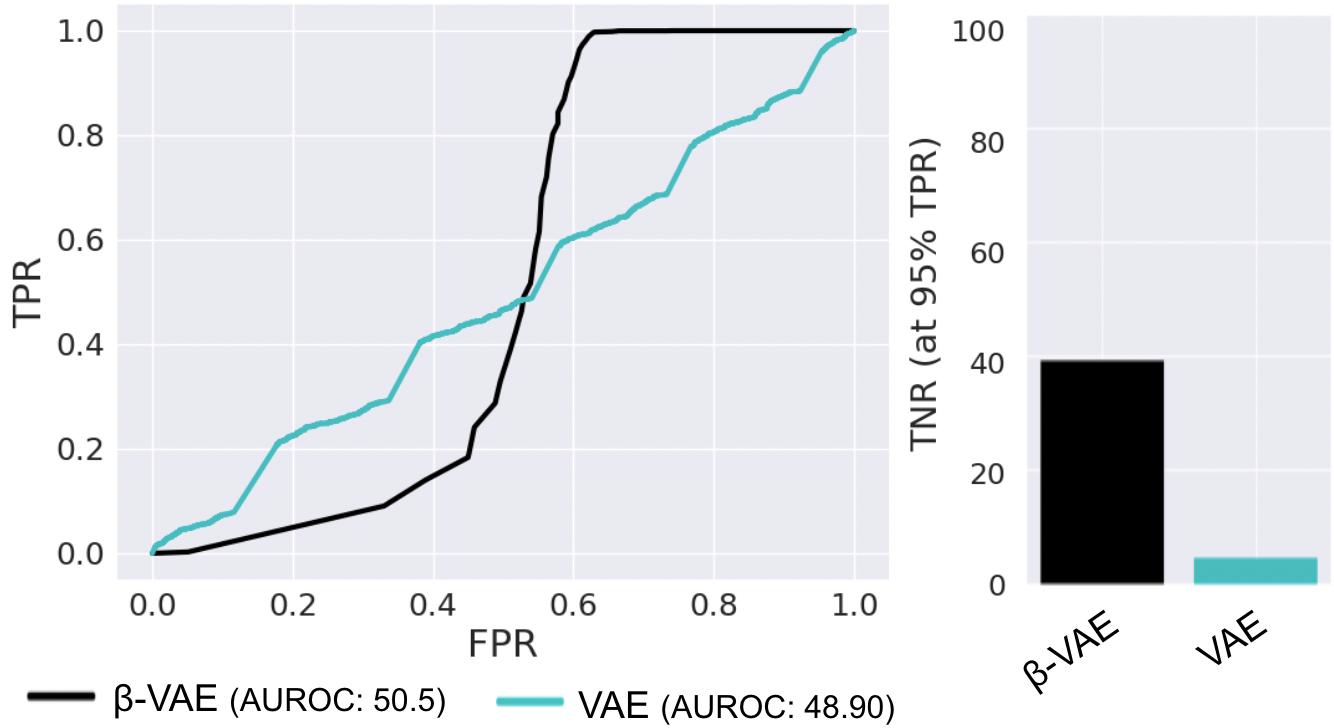}
        \caption{\footnotesize{\textbf{ROC curves (left), TNR (at 95\% TPR on right) results on replay OODs.}  The existing point-based OOD detectors in time-series data, namely VAE-based Cai et al.'s~(\citeyear{vanderbilt}), and $\beta$ VAE-based Ramakrishna et al.'s~(\citeyear{beta-vae}) perform poorly in the detection of these temporal OODs.}}
        \label{fig:results_replay}
    \end{minipage}
\vspace{-0.5cm}
\end{figure}

\section{Background and Notations}
\label{sec:background}
CODiT uses error in the temporal equivariance learned by a model on windows drawn from the training distribution as the non-conformity measure in the inductive conformal anomaly detection (ICAD) framework. With multiple $p$-values obtained from the proposed measure in ICAD, the final OOD detection score is computed by combining these values by Fisher's method. Here we provide the background on equivariance, ICAD, and Fisher's method required for technical details of the proposed OOD detector, CODiT. We also define the notations used in the rest of the paper.

\subsection{Equivariance} 
A function $f$ is equivariant with respect to a transformation $g$ if we know how the output of $f$ changes if we transform its input from $x$ to $g(x)$. Learning features that are equivariant to translations on inputs by sharing of kernels in the convolutional neural networks has played a crucial role in the success of these networks~\citep{cnn_eq1}. Invariance is a special case of equivariance where the output of $f$ does not change by the transformation $g$ on its input. Invariance with respect to geometric transformations such as rotation, tilt, scale, etc. is a desired property of the deep learning classifiers. For example, classification results on the upright images of cats should not change with a tilt in these images. 
% \OS{This section seems to have too many subsections.  4.1.1 can be turned into a definition environment, and 4.1.3 would probably better off as a definition as well.}

\begin{definition}[\citeauthor{equivariance}, \citeyear{equivariance}]
For a set $X$, a function $f$ is defined to be equivariant with respect to a set of transformations $G$, if there exists the following relationship between any transformation $g \in G$ of the function's input and the corresponding transformation $g'$ of the function's output:
\begin{equation}
    f(g(x)) = g'(f(x)), \forall x \in X.
\label{eq:equivariance}
\end{equation}
\end{definition}
Invariance is a special case of equivariance where $g'$ is the identity function, i.e., the output of $f$ remains unchanged by the transformation $g$ on its input. 

\textbf{\\ Learning Equivariance: Autoencoding Variational Transformations (AVT).}
Augmenting training data with transformations from the set $G$ of geometric transformations is a common approach to learning invariance with respect to $G$~\citep{chen2020group,data-aug, chatzipantazis2021learning}. The auxiliary task of predicting an applied transformation from $G$ on the training data also encourages the model to learn equivariance with respect to $G$~\citep{avt}. Qi et al.'s~\citeyear{avt} ``Autoencoding Variational Transformations'' (AVT) framework trains a VAE to learn a latent space that is equivariant to transformations. For the set $X$ of training images and the set $G$ of geometric transformations, a VAE is trained to predict the applied transformation from $G$ on an input $x \in X$. Equivariance between the latent space of VAE and $G$ is learned by maximizing mutual information between the latent space and $G$. 
\\
\begin{definition}[\citeauthor{temp_equi}, \citeyear{temp_equi}]
Temporal equivariance of a function $f$ from equation~\ref{eq:equivariance} is defined on a set $X$ of windows of consecutive time-series datapoints and with respect to a set $G$ of temporal transformations. 
\end{definition}

Some examples of temporal transformations on video clips are skipping every second frame in the clip (2x speed), shuffling the frames in the clip (shuffle), reversing the order of frames in the clip (reverse), and reversing the order of the second half frames in the clip (periodic). 

In the rest of the paper, we will use the notation $G_T$ to represent a set of temporal transformations. We call a function $f$ as $G_T$-equivariant if it learns equivariant representations of windows drawn from the training distribution with respect to the set $G_T$ of temporal transformations. We refer to the deviation from the expected result of this function $f$ on a transformed input (transformed with a $g \in G_T$) as deviation from the iD $G_T$-equivariance.
% We propose extending the AVT framework for learning $G_T$-equivariance on the windows of time-series training data.

\subsection{Inductive Conformal Anomaly detection (ICAD)} 
Inductive Conformal Anomaly Detection (ICAD)~\citep{icad} is a general framework for testing if an input conforms to the training distribution. It is based on a non-conformity measure (NCM), which is a real-valued function that assigns a non-conformity score $\alpha$ to the input. This score indicates non-conformance of the input with data drawn from the training distribution. The higher the score is,
% \IL{This seems inconsistent with the statement about higher p-values for iD, 2 lines after Eq (2).} 
the more non-conforming or anomalous the input is with respect to training data. An example of the non-conformity score is the reconstruction error by a VAE trained on data drawn from the training distribution.

% \subsubsection{Technical Details} 
% \OS{This section should be renamed, or section heading can be omitted altogether.  It seems to be a mix of notation and definitions.}
The training dataset $X$ of size $l$ is split into a \textit{proper training set} $X_{\text{tr}} = \{x_j: j=1,\ldots,m\}$ and a \textit{calibration set} $X_{\text{cal}}=\{x_j: j=m+1,\ldots,l\}$.
% assuming $x_j$ are exchangeable. 
Proper training set $X_{\text{tr}}$ is used in defining NCM. In the example of reconstruction error by a VAE as the non-conformity score, the VAE trained on $X_{tr}$ is used for computing the error. Calibration set $X_{\text{cal}}$ is a held-out training set that is used for computing $p$-value of an input. $p$-value of an input $x$ is computed by comparing its non-conformity score $\alpha(x)$ with these scores on the calibration datapoints: 
% The ICAD is based on a non-conformity measure (NCM), a real-valued function that assigns a non-conformity score $\alpha$ to the input. This score indicates the non-conformance of the input with the $X_{\text{tr}}$. Higher the score is, more non-conforming or anomalous the input is with respect to $X_{\text{tr}}$. An example of the non-conformity score is the reconstruction error of an input by a VAE trained on $X_{\text{tr}}$.
% ICAD computes the $p$-value of an input $x$ by comparing its non-conformity score with the non-conformity scores of $X_{\text{cal}}$: 
\begin{equation}
  p\text{-}value(x) = \frac{|\{ j=m+1,...,l  : \   \alpha(x) \leq \alpha(x_{j})\}|+1}{l-m+1}.
  \label{eq:p_value}
\end{equation}
If $x$ is drawn from the training distribution, then its non-conformity score is expected to lie within the range of scores for the calibration datapoints and thus higher $p$-values for the iD datapoints. With $\epsilon \in (0, 1)$ as the anomaly detection threshold, $x$ is therefore detected as an anomalous input if the $p$-value of $x$ is less than $\epsilon$.

\textbf{\\False Detection Rate Guarantees.}
% If an input $x$ and datapoints in the training dataset are IID, then for any choice of the NCM, the probability of misclassifying $x$ as an anomalous input is less than or equal to $\epsilon$
The false anomalous detection on an input drawn from the training distribution is upper bounded by the specified detection threshold $\epsilon$ in the ICAD framework.
\begin{lemma}[\citeauthor{cp}, \citeyear{cp}]
If an input $x$ and 
the calibration datapoints $x_{m+1}, \dots, x_{l}$ are independent and identically distributed (IID), then for any choice of the NCM defined on the proper training set $X_{\text{tr}}$, the $p$-value($x$) in~\eqref{eq:p_value} is uniformly distributed. Moreover, we have $Pr\left( p\text{-value}(x) < \epsilon \right) \le \epsilon$, where the probability is taken over $x_{m+1}, \dots, x_{l}$, and $x$.
\label{lemma:uniform_p_values}
\end{lemma}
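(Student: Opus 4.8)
The plan is to exploit the exchangeability of the non-conformity scores induced by the IID assumption, which is the standard route to conformal validity. The key observation is that the NCM $\alpha$ is defined \emph{solely} from the proper training set $X_{\text{tr}}$, and $X_{\text{tr}}$ is disjoint from both the calibration set and the test input. I would therefore begin by conditioning on $X_{\text{tr}}$ and treating $\alpha$ as a fixed measurable function for the remainder of the argument. Writing $n = l - m$ for the number of calibration points, the scores $\alpha(x), \alpha(x_{m+1}), \ldots, \alpha(x_l)$ are then images of the IID sample $x, x_{m+1}, \ldots, x_l$ under a single fixed map, hence are themselves IID and, in particular, exchangeable.

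Next I would set $R = |\{ j \in \{m+1,\ldots,l\} : \alpha(x) \le \alpha(x_j) \}|$, so that equation~\ref{eq:p_value} reads $p\text{-}value(x) = (R+1)/(n+1)$. Under the generic assumption that the scores admit a continuous distribution, so that ties occur with probability zero, exchangeability of the $n+1$ scores forces the rank of $\alpha(x)$ among them to be uniform on $\{1,\ldots,n+1\}$; equivalently, $R+1$ is uniform on $\{1,\ldots,n+1\}$. Consequently $p\text{-}value(x)$ is uniform on the grid $\{1/(n+1), 2/(n+1), \ldots, 1\}$, each value carrying mass $1/(n+1)$, which is the asserted uniform distribution.

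For the tail bound I would count the grid points falling strictly below $\epsilon$: since $\Pr(p\text{-}value(x) < \epsilon) = |\{k \in \{1,\ldots,n+1\} : k < \epsilon(n+1)\}|/(n+1)$ and the number of positive integers below $\epsilon(n+1)$ equals $\ceil{\epsilon(n+1)} - 1 \le \epsilon(n+1)$, we obtain $\Pr(p\text{-}value(x) < \epsilon) \le \epsilon$. Finally, because the conditional distribution given $X_{\text{tr}}$ does not depend on $X_{\text{tr}}$ at all, de-conditioning (taking the expectation over $X_{\text{tr}}$) yields the unconditional statement, with the probability taken over $x_{m+1},\ldots,x_l$ and $x$ as claimed.

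The main obstacle, and essentially the only place the argument needs care, is the treatment of ties in the non-conformity scores. With the $\le$ convention in equation~\ref{eq:p_value}, ties can only inflate $R$ and hence the $p$-value, so the bound $\Pr(p\text{-}value(x) < \epsilon) \le \epsilon$ survives unconditionally and the detector remains conservative; exact uniformity, however, genuinely requires the no-tie / continuity hypothesis, which I would state explicitly rather than leave implicit. A secondary subtlety is justifying uniformity of the rank from exchangeability: this follows because all $n+1$ scores are equidistributed under any permutation, so each of the $n+1$ possible ranks is equally likely.
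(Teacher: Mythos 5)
Your proof is correct, and it is genuinely fuller than what the paper itself offers. The paper imports the uniformity claim wholesale from the cited conformal-prediction literature (\citealp{cp}) and its in-text argument consists only of the final counting step: given that the $p$-value is uniform on the grid $\{1/(l-m+1),\ldots,1\}$, the false-detection probability is $\sum_{1\le i \le (l-m+1)\epsilon} 1/(l-m+1) = \lfloor (l-m+1)\epsilon \rfloor/(l-m+1) \le \epsilon$. You instead derive the uniformity from first principles: conditioning on $X_{\text{tr}}$ so that the NCM is a fixed map, observing that the $n+1$ scores are then IID hence exchangeable, concluding rank uniformity and thus uniformity of $(R+1)/(n+1)$ on the grid, and then de-conditioning --- after which your counting step coincides with the paper's (your count $\lceil \epsilon(n+1)\rceil - 1$ of grid points \emph{strictly} below $\epsilon$ is in fact slightly sharper than the paper's $\lfloor (l-m+1)\epsilon\rfloor$, which counts points $\le \epsilon$; both yield the bound). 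Your explicit treatment of ties is also a real refinement over both the lemma statement and the paper: you correctly note that exact uniformity requires a no-tie (continuity or randomized tie-breaking) hypothesis, while the conservative bound $\Pr(p\text{-value}(x) < \epsilon) \le \epsilon$ survives ties because the $\le$ convention in equation~\ref{eq:p_value} can only inflate the $p$-value. To make that last remark fully rigorous you would compare against a tie-broken version (e.g.\ perturb all scores by IID continuous noise, which restores exact grid uniformity and dominates the tied $p$-value from below), or use the deterministic argument that at most $\lfloor \epsilon(n+1)\rfloor$ of the $n+1$ scores can have $p$-value below $\epsilon$; but this is exactly the subtlety you flagged, and your resolution sketch is the standard one.
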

% \begin{proof}
% If $x$ and datapoints in $X_{\text{}cal}$ are IID, then their non-conformity scores $\alpha(x) \text{, and }  \alpha(x_{j}) : m+1 \leq j \leq l$ are also IID conditioned on $X_{\text{tr}}$. Thus, the $p$-value($x$) from~\eqref{eq:p_value} is uniformly distributed over $\{1/(l-m+1),2/(l-m+1),\ldots,1\}$. 
% \end{proof}

% \begin{lemma}
% If an input $x$ and the datapoints in the calibration set $X_{\text{cal}}$ are IID, then for any choice of the NCM defined on the proper training set $X_{\text{tr}}$, the detection threshold $\epsilon$ is the upper bound on the probability of misclassifying $x$ as anomalous by ICAD.
% \label{lemma:icad}
% \end{lemma}
% \begin{proof}
From Lemma~\ref{lemma:uniform_p_values}, we know that if $x$ and the datapoints in the calibration set $X_{\text{cal}}$ are IID, then the $p$-value($x$) from~\eqref{eq:p_value} is uniformly distributed over $\{1/(l-m+1),2/(l-m+1),\ldots,1\}$. The probability of $p$-value($x$) less than $\epsilon$  or misclassifying $x$ as anomalous is, therefore,
$\sum_{1\le i \le (l-m+1)\epsilon} 1/(l-m+1) = \lfloor (l-m+1)\epsilon \rfloor/(l-m+1) \le \epsilon.$
% \end{proof}

% CODiT uses deviation from the $G_T$-equivariance learned by the AVT model on the windows of time-series data drawn from training distribution as the NCM in the ICAD framework for OOD detection in time-series data with bounded false detection rate. 

\subsection{Fisher's Method} 
The same hypothesis can be tested by multiple conformal predictors and an ensemble approach for combining these predictions can be used to improve upon the performance of individual predictors. Fisher's method is one of these approaches for combining multiple conformal predictions or $p$-values of an input from~\eqref{eq:p_value}. Fisher value of an input $x$ from $n$ $p$-values is computed as follows:
\begin{equation}
    \text{fisher-value} (x) = t \sum_{i=0}^{n-1} \frac{(-\log t)^i}{i!}, \text{ where $t = \prod_{k=1}^n p_k$.}
    \label{eq:fisher}
\end{equation}

% Fisher's method detects $x$ as an anomalous input if the fisher-value($x$) from~\eqref{eq:fisher} is less than the specified detection threshold $\epsilon$. 
% If the $n$ $p$-values used in the fisher-value($x$) are uniformly distributed and independent, then Fisher's method for anomaly detection on $x$ preserves the false detection guarantees from the ICAD framework.

\begin{lemma}[\citeauthor{fisher}, \citeyear{fisher}]
If $n$ $p$-values, $p_1, \dots, p_n$, are independently drawn from a uniform distribution, 
then $-2 \sum_{i=1}^n \log p_i$
follows a chi-square distribution with $2n$ degrees of freedom. Thus, the combined p-value is 
\begin{align*}
    Pr \left( y \le -2 \sum_{i=1}^n \log p_i \right) = t \sum_{i=0}^{n-1} \frac{(-\log t)^i}{i!},
\end{align*}
where $t = \prod_{k=1}^n p_k$, $y$ is a random variable following a chi-square distribution with $2n$ degrees of freedom, 
and the probability is taken over $y$. 
Moreover, the combined p-value follows the uniform distribution.
\label{lemma:fisher}
\end{lemma}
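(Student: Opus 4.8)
The plan is to establish the distributional claim in three stages and then obtain the final uniformity statement from the probability integral transform. First I would handle the single-variable base case: if $p_i \sim \mathrm{Uniform}(0,1)$, a one-line change of variables gives the distribution of $-2\log p_i$. For $y>0$, $Pr(-2\log p_i \le y) = Pr(p_i \ge e^{-y/2}) = 1 - e^{-y/2}$, whose derivative $\tfrac12 e^{-y/2}$ is exactly the density of $\mathrm{Exp}(1/2)$, which coincides with the chi-square density on $2$ degrees of freedom $\tfrac{1}{2}x^{0}e^{-x/2}$.

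Second, I would prove additivity. Since the $p_i$ are independent, the terms $-2\log p_i$ are independent $\chi^2_2$ variables, and I would combine them via moment generating functions: each $\chi^2_2$ has MGF $(1-2s)^{-1}$ for $s<1/2$, so the product over the $n$ independent terms is $(1-2s)^{-n}$, the MGF of $\chi^2_{2n}$. By uniqueness of MGFs this yields $S := -2\sum_{i=1}^n \log p_i \sim \chi^2_{2n}$, which is the first assertion of the lemma.

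Third — the main computational step, and the step I expect to be the only real obstacle — I would evaluate the relevant tail probability of $\chi^2_{2n}$ in closed form and identify it with $t\sum_{i=0}^{n-1}(-\log t)^i/i!$. The cleanest route avoids integration by parts entirely: writing $S$ as a sum of $n$ i.i.d. $\mathrm{Exp}(1/2)$ variables identifies $S$ with the waiting time to the $n$-th arrival of a rate-$\tfrac12$ Poisson process, so the upper-tail probability that defines Fisher's combined $p$-value satisfies $Pr(y \ge s) = Pr(N(s) \le n-1)$ with $N(s)\sim\mathrm{Poisson}(s/2)$, giving $e^{-s/2}\sum_{i=0}^{n-1}(s/2)^i/i!$. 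Substituting the observed value $s = -2\log t$, so that $e^{-s/2}=t$ and $s/2 = -\log t$, produces exactly $t\sum_{i=0}^{n-1}(-\log t)^i/i!$ (with $t=\prod_{k=1}^n p_k$). Equivalently, one may establish the integer-shape incomplete-gamma identity $\int_a^\infty u^{n-1}e^{-u}/(n-1)!\,du = e^{-a}\sum_{i=0}^{n-1}a^i/i!$ by induction on $n$, the base case being $\int_a^\infty e^{-u}\,du = e^{-a}$ and the inductive step a single integration by parts; I would present whichever is more transparent.

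Finally, for the \emph{moreover} claim that the combined $p$-value is itself uniform, I would invoke the probability integral transform. The combined $p$-value is the map $S \mapsto Pr(y \ge S)$, i.e. the continuous survival function of $\chi^2_{2n}$ evaluated at $S$; but by the first two steps $S$ is itself $\chi^2_{2n}$-distributed. Since applying the (continuous, strictly monotone) survival function of a distribution to a variable drawn from that same distribution yields a $\mathrm{Uniform}(0,1)$ random variable, the combined $p$-value is uniform, completing the proof.
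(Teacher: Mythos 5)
Your proof is correct, and there is in fact nothing in the paper to compare it against: the paper imports this lemma verbatim as Fisher's classical result (with a citation) and never proves it, using it only as a black box in the proof of Theorem~1. Your four steps are all sound: the change of variables giving $Pr(-2\log p_i \le y) = 1 - e^{-y/2}$, hence $-2\log p_i \sim \chi^2_2$; the MGF product $(1-2s)^{-n}$ identifying the sum as $\chi^2_{2n}$; the Poisson-process (Erlang) identity $Pr(y \ge s) = e^{-s/2}\sum_{i=0}^{n-1}(s/2)^i/i!$ with the substitution $s = -2\log t$; and the probability integral transform for the final uniformity claim. One point worth making explicit: as literally printed, the lemma's display reads $Pr\left(y \le -2\sum_{i=1}^n \log p_i\right)$, whose value would be the CDF $1 - t\sum_{i=0}^{n-1}(-\log t)^i/i!$, not the stated right-hand side; the identity holds only for the upper tail $Pr\left(y \ge -2\sum_{i=1}^n \log p_i\right)$, which is also the only reading under which Fisher's combined $p$-value is small exactly when the individual $p_i$ are small. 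Your argument silently works with the survival function (the $\ge$ direction), which is the correct interpretation, but since you are proving the statement as given you should flag that you are correcting this inequality direction rather than leave the discrepancy unremarked.
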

\section{Temporal Equivariance for Conformal OOD Detection in Time-Series Data}
\label{sec:tech}
\begin{wrapfigure}[18]{tr}{0.31\textwidth}
%  \begin{minipage}[c]{0.7\textwidth}
   \begin{center}
    \includegraphics[width=0.3\textwidth]{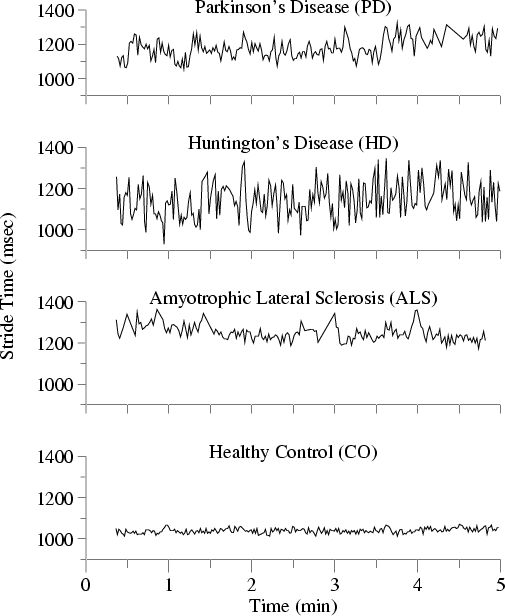}
    \caption{\footnotesize{GAIT in patients with neurodegenerative diseases~\citep{gait_dataset} as temporal OODs.}}
    \label{fig:gait}
    \end{center}
%  \end{minipage}
 %\begin{minipage}[c]{0.5\textwidth}
%\end{minipage}
%\end{figure}
\end{wrapfigure}

Here, we first classify OOD data in time-series into temporal and non-temporal types, and then provide details of the proposed detector CODiT.

% \subsection{Problem Statement}
% OOD detection in time-series data takes in a window $X_{t, w}$ of consecutive time-series datapoints $(x_t, x_{t+1},\ldots, x_{t+w-1})$, and labels $X_{t, w}$ as iD or OOD. Here $t$ is the starting time of the window and $w$ is the window length.

\subsection{OOD Data Types in Time-Series}
We classify OOD windows in time-series data into two types: \emph{temporal} OODs and \emph{non-temporal} OODs. 

A crucial property of the temporal OODs compared to the non-temporal OODs is that it is hard to detect temporal OODs by looking at individual datapoints within the window without considering time-dependency between these datapoints. 
Examples of temporal OODs in autonomous driving are car drifting video clips (Fig.~\ref{fig:drift}), and replay OODs (Fig.~\ref{fig:replay}). An example of the temporal OOD in healthcare is the GAIT (or waking pattern) of patients with neurodegenerative diseases. With the GAIT of healthy individuals as iD data, the GAIT of patients with neurodegenerative diseases, such as Parkinson's disease (PD), Huntington's disease (HD), and Amyotrophic Lateral Sclerosis (ALS), are examples of temporal OODs. Fig.~\ref{fig:gait} shows dynamics of the stride time (one of the walking pattern features) of a healthy control person and patients with PD, HD, and ALS disease. As shown in the Figure, we need a sequence of time-series datapoints to determine whether the walking pattern is from a healthy individual or a patient.  In contrast to the temporal OODs, the non-temporal OODs can be detected by looking at individual datapoints. Examples of non-temporal OODs include driving video clips under rainy, foggy, or snowy weather, given the driving video clips under clear sunny weather as iD data. We can detect weather OODs by looking at images in the window independently.

Based on these observations, we call a window $X_{t, w}$ as a temporal OOD if $X_{t, w}$ is drawn from OOD but confused to be drawn from iD by removing the time-dependency of individual datapoints within $X_{t, w}$ (e.g., randomly shuffling the order of video clip frames). As shown in the experimental section~\ref{sec:exp}, CODiT can be used to detect both temporal and non-temporal OODs in time-series data.

% Another example of temporal OODs is the GAIT of patients with neurodegenerative diseases. With the GAIT of healthy individuals as iD data, the patients' GAIT of neurodegenerative diseases, such as Parkinson's disease (PD), Huntington's disease (HD), and Amyotrophic Lateral Sclerosis (ALS), are examples of the temporal OODs. As shown in Fig.~\ref{fig:gait}, we need a sequence of time-series datapoints to determine whether the walking pattern is from a healthy individual or a patient.

% \begin{figure}[!t]
%     \centering
%     \includegraphics[height=2.75in]{figures/example-ts.png}
%     \caption{\footnotesize{GAIT in patients with neurodegenerative diseases~\citep{gait_dataset} as temporal OODs.}}
%     % \protect\footnotemark}
%     \label{fig:gait}
%  \vspace{-0.3cm}
%  \end{figure}
% % \footnotetext{Source: \url{https://physionet.org/content/gaitndd/1.0.0/}}

\subsection{CODiT}
CODiT uses an OOD detection score based on multiple $p$-values from ICAD. Here, we first define the proposed NCM to be used in the ICAD framework for computing a $p$-value along with the final detection score, and then formalize CODiT's algorithm with a bounded false detection rate.

% propose extending the AVT framework to learn temporal equivariance, and then formalize CODiT's algorithm with a bounded false detection rate along with intuition for the proposed algorithm supported by empirical evaluations.

% provide intuition for the proposed CODiT's algorithm supported by empirical evaluations and then 

\subsubsection{Proposed NCM and OOD Detection Score} 
\textbf{Proposed NCM.}
We propose to use time-dependency between datapoints in a time-series window for detection on the window. Unlike all the existing NCMs defined on individual datapoints, we propose an NCM that is defined on the window containing information about the sequence of datapoints in the window. 
Specifically, we propose using deviation from the expected iD $G_T$-equivariance learned by a model on windows drawn from the training distribution as an NCM in ICAD for OOD detection in time-series data. Learning $G_T$-equivariance via an auxiliary task of predicting the applied temporal transformation (such as shuffle, reverse, etc.) on a window requires learning changes in the original sequence of the datapoints in a predictable way. For a VAE model $M$ trained to learn $G_T$-equivariance on windows of proper training data in the AVT framework, we propose to use error in the prediction of the applied temporal transformation $g \in G_T$ on an input window $X_{t,w}$ as the NCM:   \textbf{PredictionError}$\bf{(g, M(g(X_{t,w})))}.$
% \begin{equation*}
%     \text{PredictionError}(g, M(g(X_{t,w}))). 
% \end{equation*}

We call the proposed NCM as the \textbf{Temporal Transformation Prediction Error (TTPE)} NCM.

The existing AVT framework~\citep{avt} is defined to learn equivariance with respect to geometric transformations on images. We extend it to learn $G_T$-equivariance by:
% For training an AVT model to learn $G_T$-equivariance, we modify the AVT framework by~\citet{avt} from learning equivariance with respect to a set of geometric transformations on images to learn temporal equivariance on windows of time-series datapoints by:

(1) Modifying VAE's architecture to accept windows of consecutive time-series datapoints as inputs. The time-series can be on vision (e.g. drift car video clip) or non-vision (e.g., GAIT) datapoints.\\
(2) Modifying the auxiliary task to predict the applied temporal transformation from a set $G_{T}$ on windows of time-series datapoints.

\textbf{Motivation for TTPE-NCM.} $G_T$-equivariance learned by a model on windows drawn from the training distribution is more likely to work on iD data and is not guaranteed to generalize to OOD data dissimilar to that used for training. With the set $G_T=\{2 \text{x } speed, shuffle, reverse, periodic, identity\}$, we train a VAE model on the proper training data of the drift dataset to predict an applied transformation $g$ sampled independently from a uniform distribution over $G_T$. With $G_T$ as the set of five classes of temporal transformations, we use $\text{CrossEntropyLoss}(g, M(g(X_{t,w})))$ as the $\text{PredictionError}(g, M(g(X_{t,w})))$. Fig.~\ref{fig:loss_analysis} shows that the model has much higher prediction losses on the OOD windows than on the test iD windows on all the five ground truth transformations in $G_T$. This supports our hypothesis that $G_T$-equivariance learned on data drawn the training distribution is not likely to generalize on data drawn from OOD, and therefore higher prediction errors on OOD windows than on the iD windows.

% With the VAE model $M$ trained on windows of the proper training dataset of the drift dataset to predict the applied transformation $g$ from the set $G_T=\{speed, shuffle, reverse, periodic, identity\}$, and $Error(g, M(g(X_{t,w}))) = \text{CrossEntropyLoss}(g, M(g(X_{t,w})))$, fig.~\ref{fig:loss_analysis} shows that $M$ has much higher error on the OOD windows than on the test iD windows. This supports our hypothesis that $G_T$-equivariance learned from training data does not generalize on the data drawn from OOD, and therefore higher prediction losses on OOD windows than on the iD windows.

\textbf{OOD Detection Score.} Instead of using a single $p$-value from the TTPE-NCM in ICAD, we propose using multiple ($n>1$) $p$-values to enhance detection. We require $n$ non-conformity scores for both the input and the calibration datapoints for computing $n$ $p$-values. These scores are computed from $n$ transformations sampled independently from a distribution $Q_{G_T}$ over $G_T$ for both the input and the calibration datapoints:
\begin{equation*}
    \alpha_i(X_{t,w}) = \text{PredictionError}(g_i, M(g_i(X_{t,w})))\ : \ 1 \leq i \leq n, g_i \sim Q_{G_T},
\end{equation*}
where $X_{t,w}$ is the input or a calibration datapoint. Using Fisher's method to combine these $n$ $p$-values gives us the fisher-value of input from equation~\ref{eq:fisher}. This value is expected to be higher for iD datapoints than OOD datapoints~\citep{orig_fisher}, and therefore we perform detection by using a threshold on the fisher-value of input. In other words, CODiT uses fisher-value of the input as the final OOD detection score.

% $n$ $p$-values for both calibration datapoints and the input are computed by applying $n$ transformations sampled IID from a distribution over $G_T$. We use Fisher's method to combine these $p$-values. Using a threshold on the fisher-value computed from these $n$ independent $p$-values leads to the proposed detector CODiT with guarantees on false detection rate. 

\textbf{Motivation for multiple $p$-values for OOD Detection.} A single $p$-value measures deviation from the iD $G_T$-equivariance of the input with respect to one transformation $g \sim Q_{G_T}$. With multiple $p$-values, we test this deviation with respect to multiple transformations sampled independently from $Q_{G_T}$. We hypothesize that under one transformation, an OOD window might behave as the transformed iD window but the likelihood of this decreases with the number of transformations. For testing this hypothesis, we train three VAE models with the set $G_T$ equal to $\{2 \text{x } speed, reverse, identity\}$, $\{2 \text{x } speed, shuffle, periodic, identity\}$, and  $\{2 \text{x } speed, reverse, shuffle, periodic, identity\}$, respectively. These models are trained on the proper training set of the drift dataset to predict an applied transformation $g$ sampled independently from a uniform distribution over $G_T$.  Again, we use $\text{PredictionError}(g, M(g(X_{t,w})))$ as the $\text{CrossEntropyLoss}(g, M(g(X_{t,w})))$. Fig.~\ref{fig:perf_n_bounded_fdr} shows that the detection performance (in AUROC and TNR) of CODiT increases as we increase the number $n$ of $p$-values used in the final OOD detection score (or the fisher-value) for all of the three cases ($|G_T|=3,\ 4, \text{ and } 5$). This supports our hypothesis on using multiple $p$-values for enhancing detection.

% $|G_T|=3,\ 4, \text{ and } 5$ 
% respectively on the proper training set of the drift dataset to predict an applied transformation $g$ sampled independently from a uniform distribution over $G_T$. The set $G_T$ is equal to $\{speed, reverse, identity\}$, $\{speed, shuffle, periodic, identity\}$, and  $\{speed, reverse, shuffle, periodic, identity\}$ for $|G_T|=3,\ 4, \text{ and } 5$ respectively. Again, we use $\text{PredictionError}(g, M(g(X_{t,w})))$ as the $\text{CrossEntropyLoss}(g, M(g(X_{t,w})))$. Fig.~\ref{fig:perf_n_bounded_fdr} shows that the detection performance (in AUROC and TNR) of CODiT increases as we increase the number $n$ of $p$-values used in the final OOD detection score (or the fisher-value) for all of the three cases ($|G_T|=3,\ 4, \text{ and } 5$). This supports our hypothesis on using multiple $p$-values for enhancing detection performance.

% that the OOD detection power increase with the number of $p$-values used in the fisher-value for OOD detection.

 % \begin{figure}[]
%     \centering
%     % \includegraphics[height=1in]{figures/drift_example.png}
%     \includegraphics[width=1\textwidth]{figures/drift_example.png}
%     \caption{Drift in cars as temporal OODs. This trace is taken from the drift dataset \protect\cite{drift}.}
%     \label{fig:drift}
%  \end{figure}
 
% \begin{figure}[]
%     \centering
%     \includegraphics[height=3in]{}
%     \caption{GAIT in neurodegenerative diseases as temporal OODs\protect\footnotemark}
%     \label{fig:gait}
%  \end{figure}

\begin{figure}[]

    \centering
    \includegraphics[width=1\textwidth]{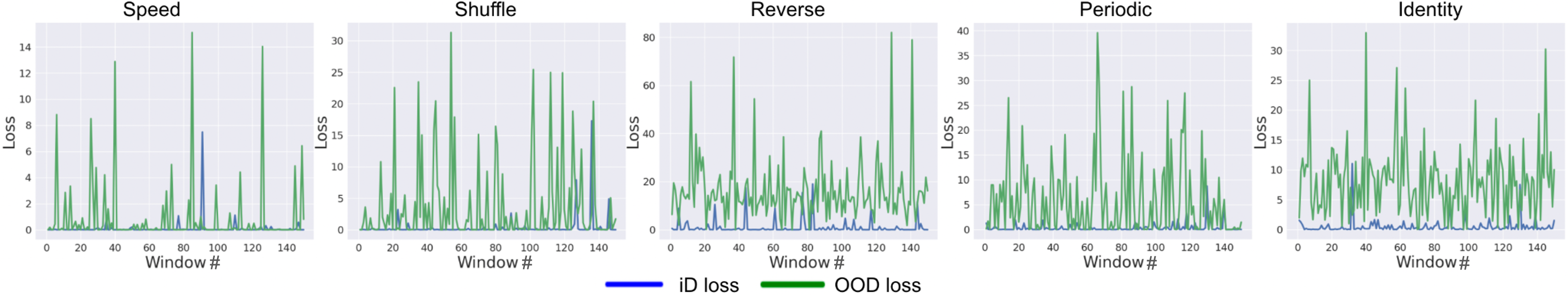}
    \caption{\footnotesize{Higher $Prediction\ Error(g, M(g(X_{t,w})))=\text{CrossEntropyLoss}(g, M(g(X_{t,w})))$ on the OOD windows than on the test iD windows of the drift dataset. This shows that $G_T$-equivariance learned on the windows drawn from the training distribution is less likely to generalize on the windows drawn from OOD.}}
    % \textcolor{red}{RK: Will modify labels to CrossEntropyLoss$(g, M(g(X_{t,w})))$ and window number (or something similar).}}

\label{fig:loss_analysis} 
\vspace{-0.5cm}
\end{figure}

% \begin{figure}[!t]
%     \centering
%     \begin{subfigure}
%         \centering
%         \includegraphics[height=1.1in]{}
%         % \caption{$|X_{\text{cal}}| = 1500$}
%     \end{subfigure}
%     % \quad
%     \begin{subfigure}
%         \centering
%         \includegraphics[height=1.1in]{}
%         % \caption{$|X_{\text{cal}}| = 2000$}
%     \end{subfigure}

% \caption{\footnotesize{AUROC (left) and TNR (right) vs $n$ on drift dataset.}}
% \label{fig:perf_n} 
% \end{figure}

% \begin{figure}[!t]
%     \centering
%     \includegraphics[height=1.1in]{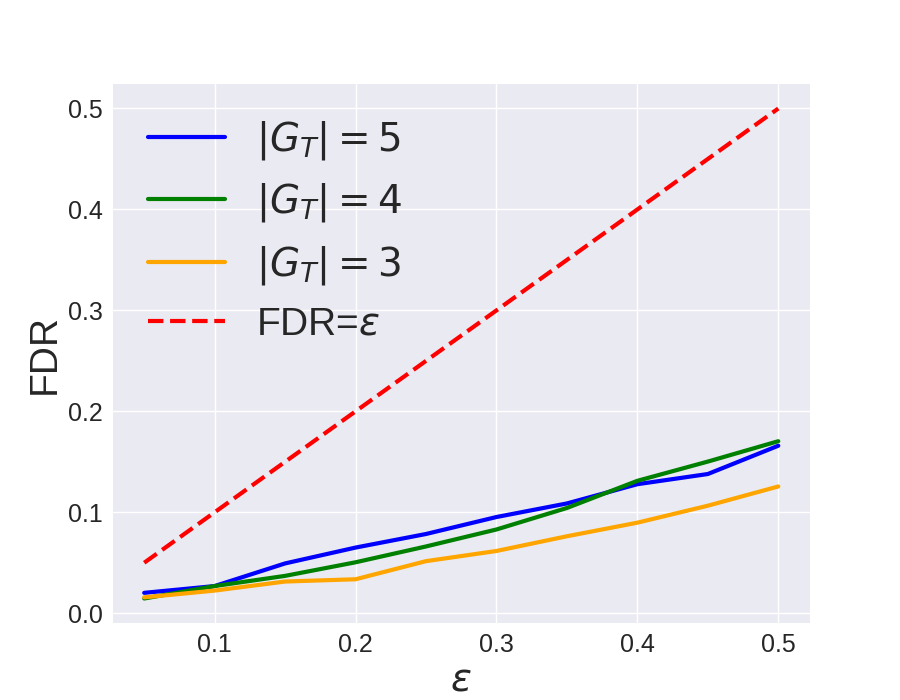}
%     \caption{Bounded FDR of CODiT on drift dataset.}
%     \label{fig:bounded_fdr}
% \end{figure}

\begin{figure}[]
    \centering
    \begin{subfigure}
        \centering
        \includegraphics[width=0.25\textwidth]{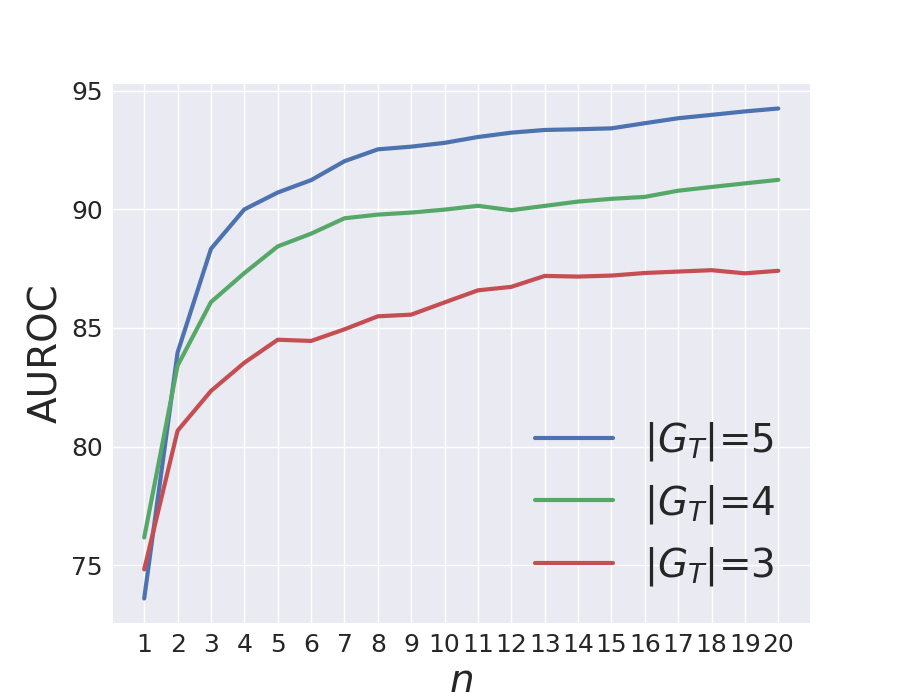}
        % \caption{$|X_{\text{cal}}| = 1500$}
    \end{subfigure}
    % \quad
    \begin{subfigure}
        \centering
        \includegraphics[width=0.25\textwidth]{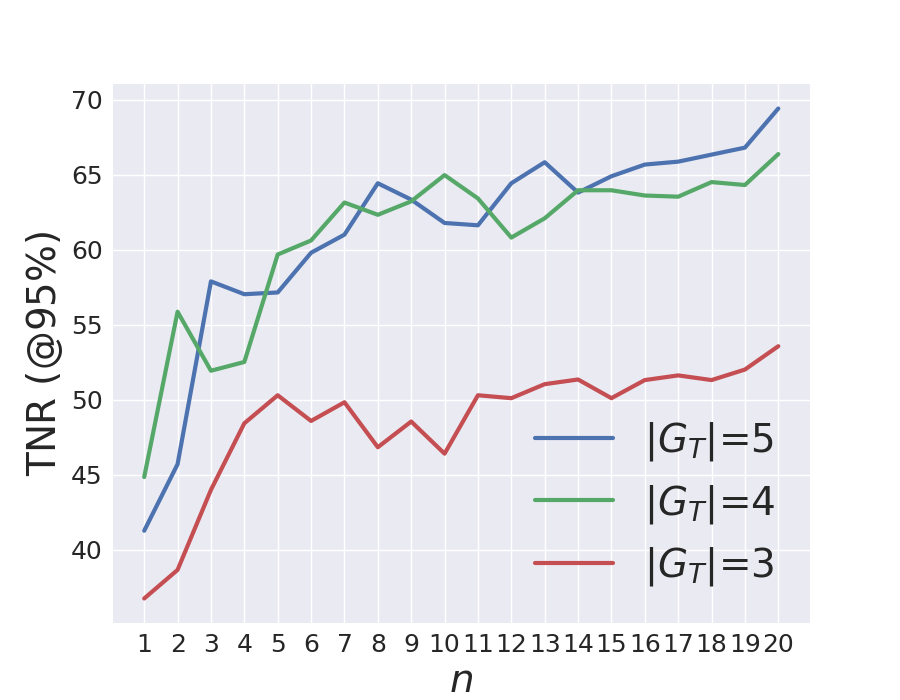}
        % \caption{$|X_{\text{cal}}| = 2000$}
    \end{subfigure}
    % \quad
    \begin{subfigure}
        \centering
        \includegraphics[width=0.25\textwidth]{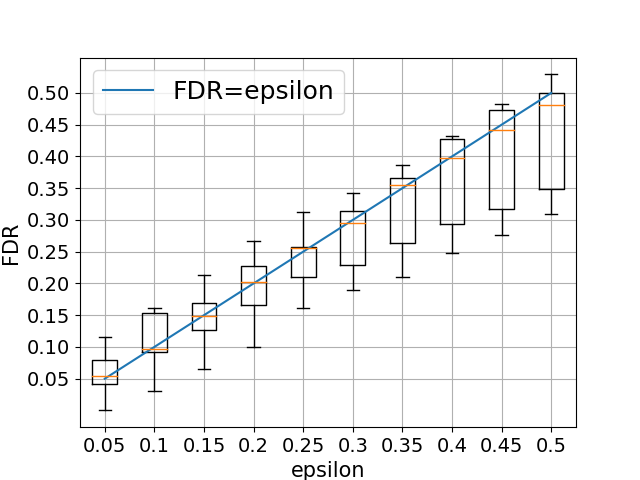}
        % \caption{$|X_{\text{cal}}| = 2000$}
    \end{subfigure}
  \vspace{-0.5cm}  
\caption{\footnotesize{AUROC vs. $n$ (left), TNR vs. $n$ (center) shows that the performance of CODiT increases with the increase in the number $n$ of $p$-values used in the fisher-value for detection. False Detection Rate (FDR) of CODiT ($n=5$) is bounded by $\epsilon$ on average (right).}
% \SP{BTW, did you find any interpretation on the conservative FDR? I guess you applied a different random transform for each calibration datapoint? Or any neumerical problem in Line 9 of Algorithm 1?}\textcolor{red}{RK: I applied a different random transformation for each calibration datapoint. I am working on finding the cause of conservative FDR..}
}
\label{fig:perf_n_bounded_fdr}
\vspace{-0.5cm}
\end{figure}
 
\subsubsection{Algorithm and Guarantee for OOD Detection} 
For the ICAD guarantees from Lemma~\ref{lemma:uniform_p_values} to hold on a $p$-value for an input sampled from the training distribution, we require the calibration set used in the $p$-value computation to be IID. With time-series calibration traces, we propose to create an IID calibration set by using exactly one calibration window from each calibration trace. For each trace, this window is sampled independently from a uniform distribution over the calibration windows in the trace. With $Q_{G_T}$ as  the distribution over the set $G_T$ of temporal transformations, non-conformity scores on the  calibration windows in the set are computed from the TTPE-NCM by sampling a transformation independently from $Q_{G_T}$ for each window in the set. $n$ such sets of non-conformity scores on the $n$ IID calibration sets are computed and passed as an input to the proposed Algorithm~\ref{alg:ood_det} for OOD detection in time-series data.

% Algorithm~\ref{alg:ood_det} is the proposed algorithm CODiT for OOD detection in time-series data. 
% With $Q_{G_{T}}$ as the distribution over the set $G_T$ of temporal transformations, 
Line 4 of the Algorithm samples a transformation $g$ independently from $Q_{G_{T}}$. The transformed input $g(X_{t,w})$ is passed through the VAE model $M$ trained to learn $G_T$-equivariance on the windows drawn from the proper training data (Line 5). Line 6 computes the non-conformity score $\alpha$ of $X_{t,w}$ from the TTPE-NCM, which is the prediction error function $f$ over the applied transformation $g$ on $X_{t,w}$ and the transformation $\hat{g}$ predicted by $M$. $p$-value of $X_{t,w}$ is computed in Line 7 by comparing its non-conformity score with these scores on the calibration windows. This process from sampling a transformation from $Q_{G_T}$ to computing the $p$-value of $X_{t,w}$ is repeated $n$ times to compute $n$ $p$-values of $X_{t,w}$ (Lines 3 to 8). $X_{t,w}$ is detected as OOD if the fisher-value computed from its $n$ $p$-values is less than the desired false detection rate $\epsilon$ (Line 10).

\begin{algorithm}
\caption{CODiT: Conformal Out-of-Distribution Detection in Time-Series Data}
    \label{alg:ood_det}
     \begin{algorithmic}[1]
        \STATE{\bfseries Input:} 
        a window $X_{t, w}$ of time-series data,
        VAE model $M$ trained on proper training set of the iD windows, distribution $Q_{G_{T}}$ over the set $G_T$ of temporal transformations, prediction error function $f$, $n$ sets of calibration set alphas $\{\alpha_j^{k}: 1 \leq k \leq n, m+1\le j \le l\}$, and a desired false detection rate $\epsilon \in (0, 1)$ 
        \STATE {\bfseries Output:} ``$1$'' if $X_{t, w}$ is detected as OOD; 
       ``$0$'' otherwise
            \FOR{$k \gets 1,\ldots,n$}
              \STATE $g \sim Q_{G_{T}}$
              %\STATE $\hat{g} \gets M(g(x_{t-w+1}, \ldots, x_t))$
              \STATE {$\hat{g} \gets M(g(X_{t, w}))$}
              \STATE $\alpha \gets f(g,\hat{g})$
              \STATE $p_k \gets \frac{|\{j=m+1,\ldots,l: \ \alpha \le \alpha_{j}^{k} \}|+1}{l-m+1}$
            \ENDFOR
            %\STATE $P = \prod_{i=1}^{k}{p_i}$
            %\STATE $o = P \sum_{i=0}^{k-1}{\frac{(-\log P)^{i}}{i!}}$
            \STATE $t \gets \prod_{k=1}^{n}p_k$
            \STATE \textbf{if} {$t \sum_{i=0}^{n-1} \frac{(-\log t)^i}{i!} < \epsilon$} \textbf{then} \textbf{return} $1$ \textbf{else} \textbf{return} $0$ 
            % \IF {$t \sum_{i=0}^{n-1} \frac{(-\log t)^i}{i!} < \epsilon$}
            %     \STATE \textbf{Return} $1$
            % %\ELSIF {$t=|X|${\color{blue}[SP: Do we need this equality condition?]}}
            % \ELSE 
            %     \STATE \textbf{Return} $0$ 
            % \ENDIF
    %   \ENDFOR
    
    \end{algorithmic}
\end{algorithm}

\begin{theorem}\label{cor:bound}
The probability of false OOD detection on $X_{t,w}$ by Algorithm~\ref{alg:ood_det} is upper bounded by $\epsilon$.
\vspace{-1em}
\end{theorem}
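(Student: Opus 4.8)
The plan is to reduce the statement to the two distributional facts already in hand: the marginal uniformity of a single ICAD $p$-value (Lemma~\ref{lemma:uniform_p_values}) and the fact that Fisher's combination of independent uniform $p$-values is again uniform (Lemma~\ref{lemma:fisher}). Under the null hypothesis that $X_{t,w}$ is drawn from the training distribution and is IID with the calibration windows, ``false OOD detection'' is exactly the event that the fisher-value tested in Line~10 falls below $\epsilon$. So it suffices to show that this combined value is (super-)uniform and then read off $\Pr(\text{fisher-value} < \epsilon) \le \epsilon$.

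First I would fix an index $k$ and argue that the single $p$-value $p_k$ produced in Lines~4--7 is uniform. The wrinkle is that the non-conformity score $\alpha = f(g, M(g(X_{t,w})))$ depends not only on the window but also on the sampled transformation $g \sim Q_{G_T}$, so Lemma~\ref{lemma:uniform_p_values} does not apply to the windows alone. I would instead treat the pair (window, transformation) as a single augmented datapoint, of which the score is a fixed measurable function. Since $X_{t,w}$ and the windows of the $k$-th calibration set are IID (the set is built to be IID, one window per trace), and the transformations for the input and for each calibration window are drawn IID from $Q_{G_T}$ independently of the windows, the augmented input and augmented calibration points are IID. Lemma~\ref{lemma:uniform_p_values} then applies verbatim to the augmented points and yields that $p_k$ is uniform on $\{1/(l-m+1),\ldots,1\}$.

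Next I would invoke Lemma~\ref{lemma:fisher}, whose hypothesis is that $p_1,\dots,p_n$ are independent and uniform. The $n$ transformations applied to the input in Line~4 are sampled independently across the loop, and the $n$ calibration sets (hence their score collections $\{\alpha_j^k\}$) are generated independently of one another; these are the only fresh sources of randomness entering each $p_k$. Granting independence, Lemma~\ref{lemma:fisher} gives that the fisher-value $t\sum_{i=0}^{n-1}(-\log t)^i/i!$ is uniform, and the detection rule closes the bound. The discreteness of the $p_k$ (they are discrete uniform, hence super-uniform, rather than continuous) is absorbed exactly as in the floor-function computation following Lemma~\ref{lemma:uniform_p_values}, which turns the equality into the desired inequality $\le \epsilon$.

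The step I expect to be the main obstacle is the independence asserted in the previous paragraph. Although the transformations and the calibration sets are drawn independently across $k$, every $p_k$ is computed from the \emph{same} input window $X_{t,w}$, so a priori the $p$-values are only conditionally independent given $X_{t,w}$ and share that common randomness. Making the hypothesis of Lemma~\ref{lemma:fisher} rigorous is therefore the delicate point: one route is to condition on $X_{t,w}$ and verify that the conditional law of each $p_k$ does not in fact depend on $X_{t,w}$ (so that conditional independence upgrades to unconditional joint uniformity); an alternative is to recast the entire pipeline through the augmented-datapoint exchangeability of the second paragraph, so that the shared window is folded into a single exchangeable ensemble before Fisher's method is applied.
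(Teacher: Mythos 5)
Your proposal takes essentially the same route as the paper's proof: establish that each $p_k$ produced in Lines~4--7 of Algorithm~\ref{alg:ood_det} is uniform via Lemma~\ref{lemma:uniform_p_values}, argue that the $n$ $p$-values are independent because the transformations and the $n$ calibration sets are sampled independently, and then apply Lemma~\ref{lemma:fisher} to conclude that the fisher-value tested in Line~10 is uniform, which yields the $\epsilon$ bound. Your augmented-datapoint device --- treating each (window, transformation) pair as a single exchangeable unit so that Lemma~\ref{lemma:uniform_p_values} applies even though the score depends on the sampled $g$ --- is a sharper rendering of what the paper phrases as the scores being ``IID conditioned on the proper training set and the set of temporal transformations $G_T$''.

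The divergence lies in the step you flag as the main obstacle, and it is worth stating plainly: the paper does not resolve it either; it simply asserts that the $n$ $p$-values are unconditionally independent because the transformations and calibration sets are drawn independently. Your worry is well founded. What the sampling scheme actually delivers is (i) conditional independence of $p_1,\dots,p_n$ given $X_{t,w}$, and (ii) marginal uniformity of each $p_k$; these two facts do not combine into joint unconditional independence unless the conditional law of $p_k$ given $X_{t,w}$ is itself (nearly) free of $X_{t,w}$ --- your route (a) --- and that is a property of the score function, not a consequence of the construction. To see that the step can genuinely fail, consider the degenerate case where the prediction error ignores the applied transformation and depends only on the window: then every $p_k$ is the rank of the same quantity $\alpha(X_{t,w})$ within an independent calibration sample, so for large calibration sets all $n$ $p$-values collapse onto a single uniform variable, and Fisher's statistic applied to $n$ copies of one uniform $p$-value violates the bound (for $n=2$ and $\epsilon=0.05$ the false detection probability inflates to roughly $0.09$). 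So the theorem, as argued both by you and by the paper, tacitly requires the transformation randomness to decouple the $p$-values; your write-up has the virtue of making this hypothesis visible rather than burying it, and your proof is in no way weaker than the one in the paper.
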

\begin{proof}
An IID calibration set is used for a $p$-value computation in Algorithm~\ref{alg:ood_det}. If an input $X_{t,w}$ is sampled from the training distribution, then $X_{t,w}$ and datapoints in the calibration set are also IID. The non-conformity scores of $X_{t,w}$ and calibration datapoints used in the $p$-value computation of Line 7 of the Algorithm~\ref{alg:ood_det} are therefore IID conditioned on the proper training set and the set of temporal transformations $G_T$. With the $n$ IID calibration sets sampled independently from the calibration traces, $n$ non-conformity scores computed from $n$ transformations sampled independently from $Q_{G_T}$ for both the input and the calibration datapoints, and Lemma~\ref{lemma:uniform_p_values}, the $n$ $p$-values of $X_{t,w}$ computed in Algorithm~\ref{alg:ood_det} are independent and uniformly distributed.
Due to this property on the $n$ $p$-values and Lemma~\ref{lemma:fisher}, the combined $p$-value in Line 10 of Algorithm~\ref{alg:ood_det} is also uniformly distributed. Therefore, the probability of falsely detecting $X_{t,w}$ as OOD from the combined $p$-value (or the fisher-value($X_{t,w}$)) is upper bounded by $\epsilon$ due to Lemma~\ref{lemma:uniform_p_values}.
\end{proof} 
\vspace{-0.4cm}
The unconditional probability that an input $X_{t,w}$ sampled from the training distribution $D$ is classified as OOD by Algorithm~\ref{alg:ood_det} is bounded by $\epsilon$. For this guarantee to hold for a sequence of inputs, we require an independent calibration set for every input in the sequence. This is computationally inefficient for real-time applications and therefore a fixed calibration set is used for all the inputs in the offline version of the ICAD algorithm~\citep{icad}. The average false detection rate on the sequence of inputs drawn from $D$ in this setting is expected to be empirically calibrated with or even higher than $\epsilon$. We also fix the $n$ sets of IID calibration datapoints and pass it as an input to the Algorithm~\ref{alg:ood_det}. Box plots in Fig.~\ref{fig:perf_n_bounded_fdr} (right) show that the false detection rate of CODiT is empirically bounded by $\epsilon$ on average.
% trained on the proper training set of the drift dataset to predict an applied transformation $g$ sampled independently from $G_T$.

\section{Experimental Results}
\label{sec:exp}
We perform experiments on the following three computer vision datasets in autonomous driving:
(Dataset~1) a driving dataset under different weather conditions generated by CARLA, an open-source simulator for autonomous driving research~\citep{carla};
(Dataset~2) a replay OOD dataset simulated by CARLA; and
(Dataset~3) a real driving drift dataset~\citep{drift}. 
In addition, to illustrate that CODiT can be used for OOD detection beyond vision, we also perform experiments on: a real physiological GAIT sensory dataset~\citep{gait_dataset} (Dataset~4). 

All the existing approaches report their results on weather OODs. So, we compare CODiT's performance  with the existing approaches on weather OODs (Dataset~1). Results on temporal OODs in vision datasets, i.e., replay (Dataset~2), and drift (Dataset~3) are compared with the existing non-point based SOTA~\citep{arvind}. 
% Since there is no existing approach that is either applicable or reports its results on non-vision datasets, 
Since~\citet{arvind}'s approach is not applicable to non-vision datasets, we generate a non-point based baseline for the detection of temporal OODs in GAIT Dataset~4 and compare our results with it. We also perform ablation studies on the drift dataset. 
\vspace{-0.1in}

\subsection{Weather OODs}
\subsubsection{Dataset} 
\textbf{Training Set.} We generate $33$ driving traces of varying lengths in clear daytime weather as the iD training traces. We randomly split these into $20$ traces of the proper training set $X_{tr}$ and $13$ traces of the calibration set $X_{cal}$. Windows from $X_{tr}$ are sampled for training the model. Windows from $X_{cal}$ are sampled $n=20$ times (with one window from each calibration trace at a time to make each window in the set independent) for calculating the $20$ sets of calibration non-conformity scores. 

\textbf{Test Set.} We generate $27$ driving traces of varying lengths on a clear day weather as the iD test traces. Weather and night time OOD traces are generated by using the automold software~\citep{automold}\footnote{Automold is a software used for augmenting road images to have various weather and road conditions.} on the $2$7 iD test traces. OOD traces start from iD and gradually become OOD, i.e., the intensity of rain, fog, snow, or low brightness (for night) starts increasing gradually turning into the OOD traces. Examples of these iD and OOD windows are shown in Appendix.

\subsubsection{Training Details, $\bf{G_T}$, and TTPE-NCM}
\label{sec:non_temp}
We train an VAE model with the R3D network architecture~\citep{r3d} on the windows of length $w=16$ from $X_{tr}$. R3D network is the 3D CNNs with residual connections and thus can be used on the 3D time-series input data. 
We use $G_{T}=\{$2x Speed, Shuffle, Periodic, Reverse, Identity$\}$ and train the model to predict the applied transformation $g \in G_T$ with cross-entropy loss between the true and the predicted transformation. $\text{CrossEntropyLoss}(g, M(g(X_{t,w})))$ is used as the $Prediction\ Error(g, M(g(X_{t,w})))$. The value of this loss is used as the non-conformity score $\alpha$ for computing the $p$-value of an input $X_{t,w}$. 

\subsubsection{Results} We report results on the sliding windows ($w=16$) of the test iD and OOD traces. We call iD as positive and OOD as negative. We report AUROC, TNR(@95\% TPR), and detection delay(@95\% TPR) in Table~\ref{carla_sota-results}. Starting from the first ground truth OOD window in a trace, number of windows required to detect the first OOD window in the trace is reported as the detection delay. This number is averaged over the total number of OOD traces and reported in the table. 
As can be seen CODiT outperforms other approaches, except for Snowy OOD, where our approach is the second best. 
    
% \begin{table}[]
% \caption{\footnotesize{Cai et al.'s/Ramakrishna et al.'s/Feng et al.'s/CODiT's results on weather and night OODs. Best results are in bold.}}
% \vspace{-0.05in}
% \setlength{\tabcolsep}{5pt}
% \centering
% \begin{adjustbox}{width=0.9\columnwidth}
% \begin{tabular}{c|c|c|c}
% \hline
% OOD &  AUROC ($\uparrow$)  & TNR ($@95\%$ TPR)  ($\uparrow$)& {Detection Delay} ($@95\%$ TPR) ($\downarrow$) \\
% \hline

% Rainy  &  53.56 / 92.07 / 84.21 / \textbf{99.71} & 0 / 81.00 / 27.63 / \textbf{98.57} & NA / \textbf{0.15} / 5.33 / \textbf{0.15}    \\
% Foggy  &  52.02 / 41.02 / 86.09 / \textbf{99.66} &  2.30 / 2.75 / 28.01 / \textbf{98.05} & 33.05 / 19.65 / 5.37 / \textbf{0}    \\
% Snowy  & 53.23 / \textbf{97.52} / 95.91 / 96.67 & 0 / \textbf{99.69} / 78.20 / 86.47 & NA / 0.33 / \textbf{0} / \textbf{0}    \\
% Night  &  50.86 / 95.57 / 75.07 / \textbf{98.94} & 1.78 / 71.90 / 0.40 / \textbf{94.55} & 72.80 / 4.07 / 85.4 / \textbf{1.41}   \\
% % Replay (to be moved to Intro) & 48.90 / 97.26 / \textbf{100} & 4.59 / 94.66 / \textbf{100} & 14.81 / 1.0 / \textbf{0}  \\ 
% % Aravind results can be found here: https://docs.google.com/document/d/1yX5UHlus8E90TDQqB97LHvu_eTk1tUJ5W0qnT87FN50/edit
% \hline
% \end{tabular}
% \end{adjustbox}
% \label{carla_sota-results}
% \vspace{-0.15in}
% \end{table}

\begin{table}[]
\caption{\footnotesize{Comparison of CODiT with Cai et al.'s (VAE), Ramakrishna et al.'s ($\beta-$VAE), and Feng et al.'s detectors on weather and night OODs from CARLA dataset. Best results are in bold.}}
% Cai et al.'s/Ramakrishna et al.'s/Feng et al.'s/CODiT's results on weather and night OODs. Best results are in bold.}}
\vspace{-0.08in}
\setlength{\tabcolsep}{3pt}
\centering
% 	\begin{minipage}{0.5\linewidth}
\small
\setlength{\tabcolsep}{3pt}
\begin{adjustbox}{width=0.9\columnwidth}
\begin{tabular}{c|cccc|cccc|cccc}
\hline
OOD  & \multicolumn{4}{c|}{AUROC ($\uparrow$)}  &  \multicolumn{4}{c|}{TNR (90\% TPR) ($\uparrow$)} & \multicolumn{4}{c}{Detection Delay ($@95\%$ TPR) ($\downarrow$)} \\ 
\cline{2-5} \cline{6-9} \cline{10-13} 
% \hline
        & VAE     & $\beta-$VAE  & Feng's & Ours                                                         
	&  VAE     & $\beta-$VAE   & Feng's  & Ours       
	
	&  VAE     & $\beta-$VAE   & Feng's  & Ours  \\ 
% \cline{2-4} \cline{5-7}
\hline
Rainy  &  53.56 & 92.07 & 84.21 & \textbf{99.71} & 0 & 81.00 & 27.63 & \textbf{98.57} & NA & \textbf{0.15} & 5.33 & \textbf{0.15} \\ 

Foggy &  52.02 & 41.02 & 86.09 & \textbf{99.66} &  2.30 & 2.75 & 28.01 & \textbf{98.05} & 33.05 & 19.65 & 5.37 & \textbf{0}\\
   
Snowy & 53.23 & \textbf{97.52} & 95.91 & 96.67 & 0 & \textbf{99.69} & 78.20 & 86.47 & NA & 0.33 & \textbf{0} & \textbf{0} \\

Night & 50.86 & 95.57 & 75.07 & \textbf{98.94} & 1.78 & 71.90 & 0.40 & \textbf{94.55} & 72.80 & 4.07 & 85.4 & \textbf{1.41} \\

\hline
\end{tabular}
\end{adjustbox}
\label{carla_sota-results}
\vspace{-0.15in}
\end{table}

\subsection{Temporal OODs}
\subsubsection{Vision: Replay and Drift}
We compare the performance of the current non-point based SOTA OOD detector in time-series data, i.e.,~\citet{arvind}'s detector with CODiT on vision temporal OODs. We use the same model architecture, $G_T$, and TTPE-NCM on replay and drift datasets as we use on CARLA dataset from Section~\ref{sec:non_temp}. 

\textbf{Replay Dataset.} Replay dataset is generated from the CARLA's $27$ iD test traces on a clear day weather by randomly sampling a position in each trace. All images from the sampled position in the trace are replaced with the image at the sampled position in the original trace. Again, results are reported for $n=20$, and on the sliding windows of replay OODs.

\textbf{Drift dataset.} We split $72$ iD traces of cars driving straight without any drift from the drift dataset~\citep{drift} into $24$ for $X_{tr}$, $14$ for $X_{cal}$, and $34$ for test iD traces. Windows from $X_{tr}$ are sampled for training the VAE.  Windows from $X_{cal}$ are sampled $n=20$ times (with one window from one trace at a time to make each window in the set independent) for calculating the $20$ sets of calibration non-conformity scores. We report results on the sliding windows of $34$ iD test and $100$ OOD drift traces.

\textbf{Results. }Fig:~\ref{fig:roc_curves_temp_oods} (left) compares the ROC, AUROC and TNR (@95\% TPR) results of CODiT with~\citet{arvind}'s detector on the replay and drift datasets. We achieve SOTA results on both datasets.

\textbf{\\ Ablations on drift.} We perform the following ablation studies on the drift dataset. All VAE models used in these studies are trained on the proper training set of the drift dataset with the same model architecture, $G_T$, and TTPE-NCM from Section~\ref{sec:non_temp}.

(1) \textbf{Performance of CODiT with Different Window Lengths $w$:}  
We train two VAE models with $w=18, \  20$ and compare the performance of CODiT (with $n=20$) with~\citet{arvind}'s detector on these window lengths. Fig~\ref{fig:roc_curves_temp_oods} (right) shows that with both $w = 18$ and $20$, CODiT performs consistently well.

(2) \textbf{Performance of CODiT with Different $\bf{|G_T|}$:} We compare the performance of CODiT with different sizes of the transformation set. Table~\ref{drift_ablation_gt_size} shows that the performance of CODiT ($n=5$) increases with $|G_T|$.  

(3) \textbf{Using Deviation from iD $G_T$-equivariance as NCM:} With $G_T=\{2 \text{x } speed, shuffle, reverse, periodic, identity\}$,  and for all ground truth temporal transformations in $G_T$, Fig.~\ref{fig:loss_analysis} shows that the non-conformity score from TTPE-NCM is much higher for OOD windows than the test iD windows. This justifies our hypothesis that $G_T$-equivariance learned by a model on windows drawn from training distribution is not likely to generalize on windows drawn from OOD.

(4) \textbf{CODiT's Performance Increases with $n$:}
We train three VAE models with $G_T$ equal to $\{2 \text{x } speed, reverse, identity\}$, $\{2 \text{x } speed, shuffle, periodic, identity\}$, and  $\{2 \text{x } speed, reverse, shuffle, periodic, identity\}$. Results on AUROC and TNR in Fig.~\ref{fig:perf_n_bounded_fdr} shows that the performance of CODiT improves with the number $n$ of $p$-values used for detection in all the three cases ($|G_T|=3, \ 4, \text{ and } 5$). This justifies our hypothesis that under one transformation an OOD window might behave as a transformed iD window but the likelihood of that decreases with the number of transformations. 

(5) \textbf{Bounded False Detection Rate (FDR):}
For the VAE model with $|G_T|=5$ described above, we perform FDR experiments with a larger calibration dataset of approximately $862$ calibration datapoints. This set is obtained by including all sliding windows on all calibration traces in the set.  $34$ calibration traces are randomly selected from the set of $48$ in-distribution traces and the rest $14$ are used as test traces. This is repeated $5$ times and the generated box-plot of CODiT ($n=5$) is shown in Fig.~\ref{fig:perf_n_bounded_fdr} (right). For all the values of $\epsilon=0.05.k, k = 1,\ldots, 10$ in the plot, the average FDR is aligned with $\epsilon$.

% We increase the number of calibration traces from $14$ to $34$ and include all sliding windows on all the calibration traces in the calibration set. This gives us a calibration dataset with a larger number of approximately $862$ calibration datapoints. $34$ calibration traces are randomly selected from the set of $48$ in-distribution traces and the rest $14$ are used as test traces. This is repeated $5$ times and the generated box-plot of CODiT is shown in Fig.~\ref{fig:perf_n_bounded_fdr}(right). For all the values of $\epsilon=0.05.k, k = 1,\ldots, 10$ in the plot, the average FDR is aligned with $\epsilon$.

% With $\epsilon = 0.05\ .\ k, k = 1,\ldots, 10$, Fig.~\ref{fig:perf_n_bounded_fdr} (right) shows that the false detection rate of CODiT ($n=5$) is always less than $\epsilon$ for all the three VAE models ($|G_T| = 3, \ 4, \text{ and } 5$) described above.

    % \begin{figure}
    %     \centering
    %     \includegraphics[width=\linewidth]{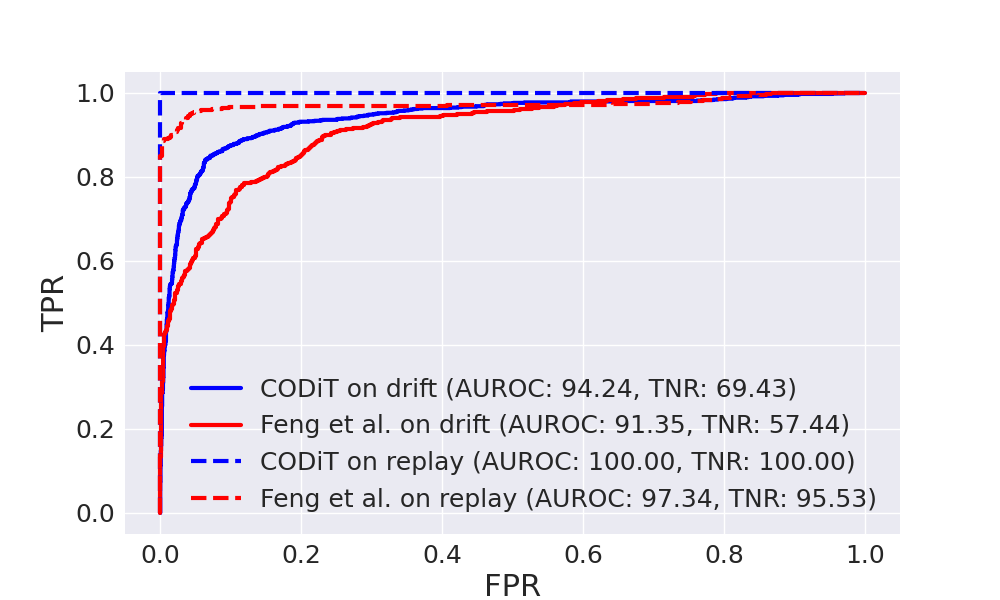}
    %     \caption{Comparison of SOTA results \protect\cite{arvind} with Ours ($|G_{T}|=5, n=20$) on drift dataset \protect\cite{drift}.}
    %     \label{fig:drift_roc}
    % \end{figure}
    % \begin{figure}
    %     \centering
    %     \includegraphics[width=0.94\linewidth]{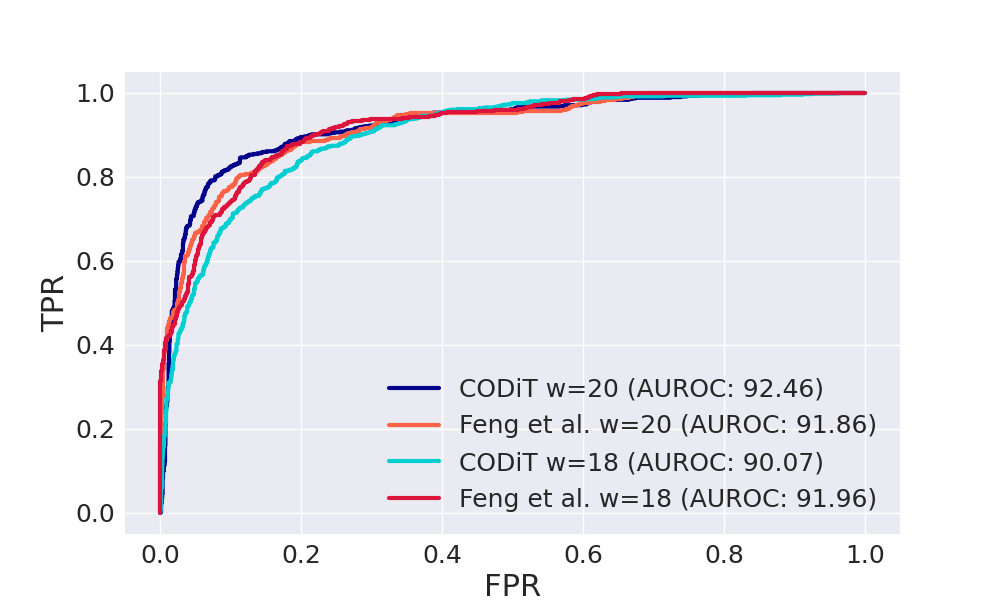}
    %     \caption{Ablation Figure.}
    %     \label{fig:drift_cl_ablation}
    % \end{figure}
    
    \begin{figure}[]


    \centering
    \begin{subfigure}
        \centering
        \includegraphics[width=0.41\textwidth]{figures/drift_n_replay.png}
        \label{fig:drift_roc}
    \end{subfigure}
    \begin{subfigure}
        \centering
        \includegraphics[width=0.41\textwidth]{figures/drift_cl_ablation.png}
        \label{fig:cl}
    \end{subfigure}
    \vspace{-0.1in}
    \caption{\footnotesize{CODiT outperforms SOTA detector \protect\cite{arvind} on temporal OODs in vision with the window length $w=16$ (left). CODiT performs consistently well with different window lengths of $w=18,\ \text{and } 20$ (right).
    % Comparison with SOTA detector \protect\cite{arvind} on temporal OODs in vision (left). Ablations on $w$ (right).%\textcolor{red}{RK: KS, can we do the label for right image in this order - CODiT w=20, Feng et al. w=20, CODiT w=18, Feng et al. w=18.}
    }}
    \label{fig:roc_curves_temp_oods}
    \end{figure}
    
\begin{table}[]
\begin{minipage}{0.45\linewidth}
    \caption {\label{drift_ablation_gt_size}\footnotesize{Performance of CODiT increases with the size of the transformation set $G_T$.}}
    \vspace{-0.1in}
    \setlength{\tabcolsep}{3pt}
    \centering
    \begin{adjustbox}{width=\columnwidth}
    \begin{tabular}{c|c|c}
    \hline
    $|G_T|$ &  Transformations  & AUROC \\
    \hline
    \multirow{3}{*}{3} & Speed, Identity, Shuffle & 84.78 \\  
                       \cline{2-3}
                       &  Reverse, Shuffle, Identity & 85.47 \\
                       \cline{2-3}
                       & Speed, Reverse, Identity & 87.67\\
    \hline
    \multirow{3}{*}{4}  & Speed, Shuffle, Periodic, Identity & 88.08 \\
                        \cline{2-3}
                       & Speed, Identity, Shuffle, Reverse & 88.76 \\ 
                       \cline{2-3}
                       & Speed, Reverse, Periodic, Identity & 89.56 \\
    
    \hline
    5  & Speed, Shuffle, Reverse, Periodic, Identity  &  90.78 \\

    \hline
    \end{tabular}
    \end{adjustbox}
\vspace{-0.15in}
\end{minipage}\hfill
\begin{minipage}{0.5\linewidth}
\caption {\footnotesize{AUROC of baseline/CODiT for OOD detection on GAIT dataset with different window lengths $w$.}}
\vspace{-0.1in}
\setlength{\tabcolsep}{5pt}
\centering
\begin{adjustbox}{width=\columnwidth}
\begin{tabular}{c|cc|cc|cc}
\hline
OOD  & \multicolumn{2}{c|}{$w=16$}  &  \multicolumn{2}{c|}{$w=18$} & \multicolumn{2}{c}{$w=20$} \\
\cline{2-3} \cline{4-5} \cline{6-7} 
% \hline
        & Baseline     & Ours                                                         
	& Baseline     & Ours       
	
	& Baseline     & Ours   \\ 
\hline
ALS & \textbf{78.25} & 68.62 & 77.73 & \textbf{79.61}  & 77.99 & \textbf{80.69} \\
PD  & 74.11 & \textbf{85.38} & 74.18 & \textbf{84.25} &   74.52 & \textbf{84.40}\\
HD  & 76.97 & \textbf{94.17} & 76.64 & \textbf{95.42}  & 76.68 & \textbf{93.74}\\
ALL & 76.23 & \textbf{83.85}  & 75.99 & \textbf{86.66}  & 76.21 & \textbf{86.68} \\
\hline
\end{tabular}
\end{adjustbox}
\label{table:gait_results}
%%\vspace{-0.18in}
\end{minipage}
\end{table}

\subsubsection{Non-vision: GAIT}
\vspace{-0.25in}
\textbf{\\Dataset.} We use the physiological sensory GAIT dataset~\citep{gait_dataset} for our case study on non-vision temporal OODs. This dataset consists of records on $16$ healthy control subjects. We split these into $6$ for $X_{tr}$, $5$ for $X_{cal}$, and $5$ for test iD records. 
% As instructed in the dataset for the severe patient group, 
We use all $27$ records from the severe patient group with neurodegenerative diseases as OOD records. These $27$ records contain $9$ records from each of the three diseases, namely  Amyotrophic Lateral Sclerosis (ALS), Parkinson's (PD), and Huntington's disease (HD).

% These 27 records contain 9 patient records from each of the three neurodegenerative diseases (ALS, PD, and HD).

\textbf{Training Details, $\bf{G_T}$, and TTPE-NCM.} We use the 1D derived time-series features from the dataset (the \texttt{.ts} files) to train a VAE model with the Lenet5 architecture~\citep{lenet} on windows sampled from $X_{tr}$. Lenet5 uses 2D CNNs that can be used on the time-series data of 1D feature space. We use $G_T=\{$high-pass filter, high-low filter, low-high filter, identity$\}$. By high-low (or low-high) filter, we mean that we apply high (or low)-pass filter to the first half features and low (or high)-pass filter to the last half features of the dataset. Again, we use the cross entropy-loss between the true and predicted transformation as the TTPE-NCM. 

\textbf{Baseline and Results.} We use a one-class SVM trained on the auto-correlated features in the time dimension of all the sliding windows in $X_{tr}$ as a baseline.  We report results on the sliding windows of the test iD and OODs records. Table~\ref{table:gait_results} compares the AUROC performance of CODiT ($n=100$) with baseline on individual and all (ALS, PD, and HD) OODs with different sliding window lengths $w$ ($16$, $18$, and $20$). As can be seen, CODiT outperforms the baseline except for one case.

% \begin{table}[]
% \vspace{-0.5cm}
% \caption {\footnotesize{\textcolor{blue}{AUROC of baseline / CODiT for OOD detection on GAIT dataset with different window lengths $w$.}}}
% \vspace{-0.1in}
% \setlength{\tabcolsep}{5pt}
% \centering
% \begin{adjustbox}{width=0.5\columnwidth}
% \begin{tabular}{c|c|c|c}
% \hline
% OOD &  $w=16$  & $w=18$ & $w=20$ \\
% \hline
% ALS & \textbf{78.25} / 68.62 & 77.73 / \textbf{79.61}  & 77.99 / \textbf{80.69} \\
% PD  & 74.11 / \textbf{85.38} & 74.18 / \textbf{84.25} &   74.52 / \textbf{84.40}\\
% HD  & 76.97 / \textbf{94.17} & 76.64 / \textbf{95.42}  & 76.68 / \textbf{93.74}\\
% ALL & 76.23 / \textbf{83.85}  & 75.99 / \textbf{86.66}  & 76.21 / \textbf{86.68} \\
% \hline
% \end{tabular}
% \end{adjustbox}
% \label{table:gait_results}
% %%\vspace{-0.18in}
% \end{table}

\section{Related Work}
\label{sec:related}
OOD detection in non time-series datasets such as CIFAR-10~\citep{cifar} has been extensively studied and detectors with OOD scores based on the difference in statistical, geometrical or topological properties of the individual iD and OOD datapoints have been proposed. These detectors can be classified into supervised~\citep{kaur_udl}, self-supervised~\citep{csi}, and unsupervised~\citep{kldiv} categories. Unsupervised approaches do not require access to any OOD data for training the detector, while supervised approaches do. Self-supervised approaches are the current SOTA for OOD detection in non time-series data which require a self-labeled dataset for training the detector. This dataset is created by applying transformations to the training data and labeling the transformed data with the applied transformation. In this paper, we consider the problem of OOD detection in time-series data and the proposed approach, CODiT, is a self-supervised OOD detection approach, where the self-labeled dataset is created by applying temporal transformations on the windows drawn from the training distribution.

% Recently, there has been interest in OOD detection in time-series data by leveraging ICAD for detection with bounded false detection rate~\citep{vanderbilt,beta-vae}. 

Recently, there has been interest in leveraging ICAD for OOD detection with guarantees on false detection rate~\citep{vanderbilt,beta-vae,iDECODe, haroush2021statistical}. While iDECODe~\citep{iDECODe}, and Haroush et al.'s~(\citeyear{haroush2021statistical}) are OOD detectors for individual datapoints, Cai et al.'s~(\citeyear{vanderbilt}), and Ramakrishna et al.'s~(\citeyear{beta-vae}) are detectors for time-series data. iDECODe~(\citeyear{iDECODe}) uses error in the equivariance learned by a model with respect to a set of transformations on individual datapoints as the non-conformity measure in ICAD for detection in non time-series data. Haroush et al.~\citeyear{haroush2021statistical} propose using a combined $p$-value from different channels and layers of convolutional neural networks (CNN) for detection. It is not clear how to directly apply individual point detectors to time-series data with the ICAD guarantees due to the following two reasons. First, even if we apply these detectors to individual datapoints in the time-series window independently, we do not know how to combine detection verdicts on these datapoints for detection on the window. Second, for detection guarantees by ICAD, it is required that all non-conformity scores for $p$-value computation to be IID~\citep{icad}. Since these detectors are not solving OOD detection problem in time-series data it is not clear how to apply them to time-series while preserving the IID assumption on the time-series data. 
% As explained in section~\ref{sec:tech}, CODiT computes the $p$-value by using only one window per training trace to preserve the IID assumption for ICAD guarantees. 
Also, iDECODe uses a single $p$-value for detection and we propose using multiple ($n > 1$) independent $p$-values to be combined by the Fisher's method for preserving the detection guarantees. In contrast to~\citep{haroush2021statistical}, our approach is not limited to CNN classifiers and can in principle be used for any type of other predictive models as well.

Cai et al.~(\citeyear{vanderbilt}) propose using reconstruction error by VAE on an input image as the non-conformity measure in the ICAD framework. Martingale formula~\citep{martingale} is used to combine multiple $p$-values computed on multiple samples of the input in the latent space of the VAE. The detection score on a window is then computed by applying cumulative sum procedure~\citep{cusum} on the martingale values of all the images in the window. Ramakrishna et al.~(\citeyear{beta-vae}) propose using KL-divergence between the disentangled feature space of $\beta$-VAE on an input image and the normal distribution, as the non-conformity measure in the ICAD framework. They also use the martingale formula to combine $p$-values of all the images in the window for detection. Both these detectors are point-based, and as shown by the experiments on replay OODs, these might perform poorly in the detection of temporal OODs. CODiT computes the $p$-value of the window (and not individual datapoints in the window) in the ICAD framework. It is, therefore, a non-point based approach and as shown by experiments on temporal OODs in Section~\ref{sec:exp}, it can be used to detect these OODs. To our knowledge, the current SOTA approach for OOD detection in time-series data is by Feng et al.~(\citeyear{arvind}). They propose extracting optical flow information from a window of the time-series data and training a VAE on this information. KL-divergence between the trained VAE and a specified prior is used as the OOD score. This detector uses optical flow to extract time-dependency in the frames of a window and thus can be used to detect temporal OODs. However, this approach does not provide any guarantees on detection and will not work on non-vision datasets as it relies on optical flows. As shown in the experiments on the GAIT sensory dataset, CODiT can be used for OOD detection in non-vision datasets.
% CODiT computes the $p$-value of the window (and not individual datapoints in the window) in the ICAD framework for detection with guarantees on the false detection rate. It is therefore a non-point based approach that can be used to detect temporal OODs and is not limited to vision datasets.

% \textcolor{red}{Write about iDECODe and time-series anomalous detection.}
% Recently, Kaur et al.~\citep{iDECODe} proposed using ``deviation in the $G$-equivariant behavior of a model learned on the training data'' as an OOD score for detection in non time-series data. Here $G$ is a set of transforms applicable to non time-series datapoints.

Anomaly detection in time-series data is also a closely related and an active research area~\citep{conformal_anomaly,shnn,robust_anomaly}. In this paper, we consider the detection of a special class of anomalous data, the OOD data (data lying outside the training distribution). For instance, let us consider the case where most of the training data is clean and the rest is adversarially perturbed. In this case, the rare adversarial inputs are anomalous with respect to the training data, where most of the training data was drawn from the training distribution of clean data. However, adversarially perturbed inputs are not OOD as some of the training data was sampled from the training distribution of these adversarially perturbed data. 

\section{Conclusion and Discussion}
\label{sec:conc}
% \vspace{-0.2cm}
We propose to use time-dependency between the datapoints in a time-series window for OOD detection on the window. Specifically, we propose using deviation from the temporal equivariance learned by a model on windows drawn from training distribution as an NCM in the conformal prediction framework for OOD detection in time-series data. Computing independent predictions from multiple conformal detectors based on the proposed measure and combining these predictions by Fisher's method leads to the proposed detector CODiT with guarantees on false OOD detection in time-series data. We illustrate the efficacy of CODiT 
% by achieving SOTA results on both temporal and non-temporal OODs in time-series data. We also show that CODiT can be used for OOD detection in different modalities of time-series data.
by achieving SOTA results on computer vision datasets in autonomous driving, and the GAIT sensory data.

The time complexity of CODiT is as follows. At inference time, ICAD computes the non-conformity score of an input and compares it with the scores of the pre-computed (in offline settings) calibration datapoints for anomaly detection. The time-complexity of ICAD is therefore $\mathcal{O}$(non-conformity score computation of the input+$|$calibration set$|$). Non-conformity score computation in our case is the output generation (i.e., prediction of the applied transformation) by the VAE model. We found it to be approximately 0.003 seconds in our experiments. The time-complexity of ICAD is for calculating one $p$-value of the input. CODiT uses multiple ($n$) $p$-values and combine them using Fisher’s test for OOD detection. So, the time-complexity of CODiT = $n \ \times$ time-complexity of the ICAD framework, where $n$ is the number of $p$-values. Therefore the time-complexity of CODiT increases linearly with the number $n$ of $p$-values used for detection. As seen from Fig.~\ref{fig:perf_n_bounded_fdr}, detection performance of CODiT improves with $n$. So, it is a trade-off between time-complexity and detection performance. We also observe that the performance of CODiT improves with the number of transformations in the set of temporal transformations. So, using all (instead of choosing a subset) of temporal transformations suitable for the application works better for CODiT.

% \subsubsection*{Acknowledgments}
% Use unnumbered third level headings for the acknowledgments. All
% acknowledgments, including those to funding agencies, go at the end of the paper.
% Only add this information once your submission is accepted and deanonymized. 

\newpage
\bibliography{tmlr}
\bibliographystyle{tmlr}

\newpage
\appendix
\section{Appendix}
\subsection{False Detection Rate Guarantees}
\begin{figure}[!h]
    \centering
    \includegraphics[width=0.5\textwidth]{}
    \caption{False Detection Rate (FDR) is much lower than $\epsilon$ for $|\text{calibration\ set}|=14$ for all the three VAE models with $|G_T|=3,\ 4, \text{ and } 5$.}
    \label{fig:non_aligned_fdr}
 \end{figure}
With $\epsilon = 0.05\ .\ k, k = 1,\ldots, 10$, Fig.~\ref{fig:non_aligned_fdr} shows that the false detection rate of CODiT ($n=5$) is always less than $\epsilon$ for all the three VAE models ($|G_T| = 3, \ 4, \text{ and } 5$) described in the ablation studies on the drift dataset. The plots in Fig.~\ref{fig:non_aligned_fdr} are reported with exactly one calibration window (sampled randomly) from each calibration trace in the drift dataset. This is because we want the calibration set to be IID for the FDR guarantees from the ICAD framework (Lemma~\ref{lemma:uniform_p_values}) to hold.  With $14$ calibration traces in the drift dataset, we have exactly $14$ calibration datapoints. Using a statistically insignificant number of $14$ calibration datapoints gives us the false detection rates that are much lower than the detection threshold $\epsilon$.  
\newline
The plots in Fig.~\ref{fig:non_aligned_fdr} are reported with $14$ calibration and $34$ in-distribution test traces, totaling to $48$ in-distribution traces. We increase the number of calibration datapoints to empirically check the FDR with respect to $\epsilon$. We increase the number of calibration traces from $14$ to $34$ and include all sliding windows on all the calibration traces in the calibration set. This gives us a calibration dataset with a larger number of approximately $862$ calibration datapoints. $34$ calibration traces are randomly selected from the set of $48$ in-distribution traces and the rest $14$ are used as test traces. This is repeated $5$ times and the generated box-plot of CODiT is shown in Fig.~\ref{fig:perf_n_bounded_fdr} (right) of the paper. For all the values of $\epsilon=0.05.k, k = 1,\ldots, 10$ in the plot, the average FDR is better aligned with $\epsilon$.
\subsubsection{Comparison with Baselines on False Detection Rate (FDR) with respect to the Detection Threshold $\epsilon$}

\textbf{Comparison with the baselines}: Two of the existing baselines, i.e. VAE-based Cai et al.'s detector~(\citeyear{vanderbilt}), and $\beta$ VAE-based Ramakrishna et al.'s detector~(\citeyear{beta-vae}) are also based on the ICAD framework. Since both of these approaches are point-based (i.e. treat each point in the window independently), we compare them with CODiT on CARLA’s weather OODs. Fig.~\ref{fig:our_box_plot} shows the box plots for CODiT and Fig.~\ref{fig:related_box_plot} shows the box-plots for baselines on the CARLA dataset. Again, these box plots are reported on $5$ trials with randomly sampled $27$ calibration and $13$ test traces from $40$ in-distribution traces in each trial. Using all the sliding windows of the $27$ calibration traces in this experiment gives us a total of approximately $2800$ calibration datapoints.

\textbf{Observations}: The quartile range for Ramakrishna et al.’s method increases with $\epsilon$. For Cai et al.'s method, the average FDR is always much lower than $\epsilon$, i.e., average FDR is not calibrated with $\epsilon$. Average FDR for CODiT is better aligned with $\epsilon$ for all the values of $\epsilon$ in the plot and has a much lower quartile ranges than Ramakrishna’s method.
 
\begin{figure*}[h]
    \centering
    \includegraphics[width=0.45\textwidth]{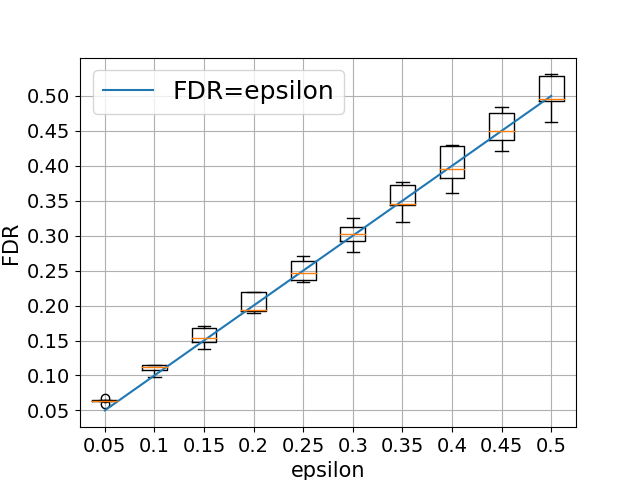}
    \caption{Box-plots on False Detection Rate (FDR) vs detection threshold $\epsilon$ for \textbf{CODiT ($n=5$)} on the CARLA dataset.}
    \label{fig:our_box_plot}
\end{figure*}

\begin{figure*}[h]
    \centering
     \begin{subfigure}
        \centering
        \includegraphics[width=0.45\textwidth]{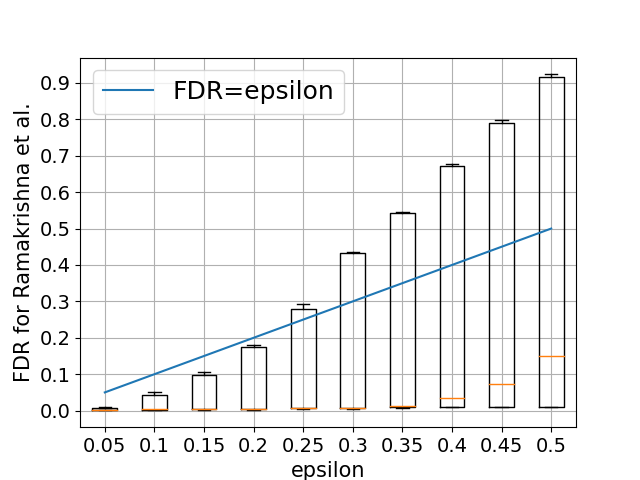}
    \end{subfigure}
    \begin{subfigure}
        \centering
        \includegraphics[width=0.45\textwidth]{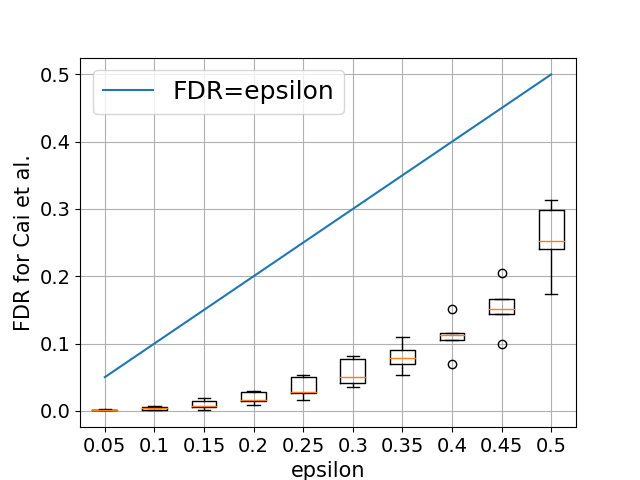}
    \end{subfigure}
    \caption{Box-plots on False Detection Rate (FDR) vs detection threshold $\epsilon$ for \textbf{baselines} on the CARLA dataset.}
    \label{fig:related_box_plot}
\end{figure*}

\subsection{Examples of iD and OOD windows}
Here we show:
\begin{itemize}
    \item A window from the iD trace of the drift dataset.
    \item A window from the iD trace of the CARLA dataset.
    \item Windows from the weather and night OOD traces from the CARLA dataset.
\end{itemize}

\begin{figure*}[h]
    \centering
    \includegraphics[width=\textwidth]{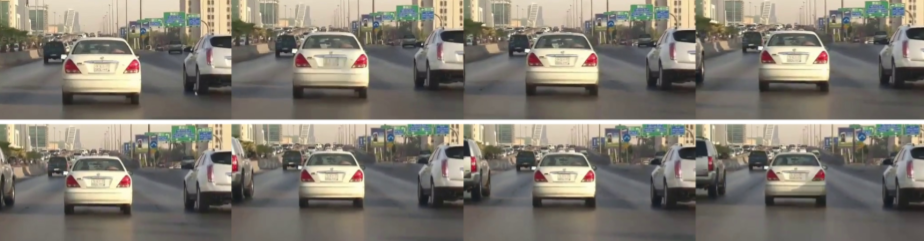}
    \caption{A window from an iD trace of the drift dataset: \textbf{car driving straight without any drift.}}
    \label{fig:drift_in}
\end{figure*}
\vspace{-5cm}
\begin{figure*}[]
    \centering
    \includegraphics[width=\textwidth]{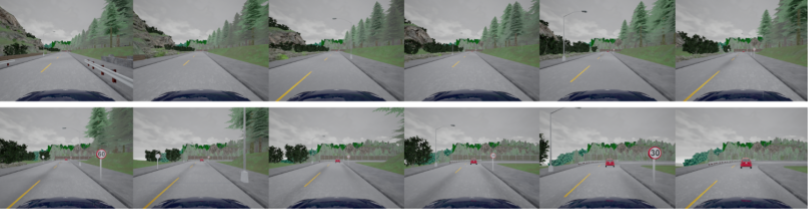}
    \caption{A window from an iD trace of the CARLA dataset: \textbf{driving in the clear daytime weather}.}
    \label{fig:carla_in}
\end{figure*}
% \vspace{-1.5em}
% \begin{figure*}[]
%     \centering
%     \includegraphics[width=0.75\textwidth]{}
%     \caption{OOD window from the \textbf{replay} OOD trace. The window becomes OOD at the third frame when the same image (from the second frame) is repeated till the end.}
%     \label{fig:carla_replay}
%  \end{figure*}
% \vspace{-1.5em}
\begin{figure*}[]
    \centering
    \includegraphics[width=1\textwidth]{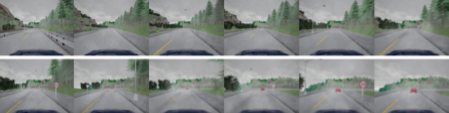}
    \caption{An OOD window from the \textbf{foggy} trace. The intensity of fog gradually increases in these OOD traces.}
    \label{fig:carla_foggy}
 \end{figure*}
\vspace{-1.5em}
\begin{figure*}[]
    \centering
    \includegraphics[width=1\textwidth]{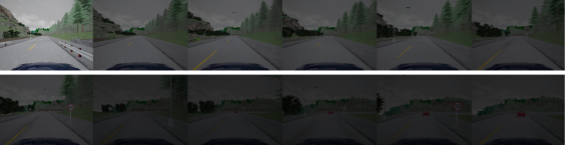}
    \caption{An OOD window from the \textbf{night} trace. The intensity of low brightness gradually increases in these OOD traces.}
    \label{fig:carla_night}
 \end{figure*}
\vspace{-1.5em}
\begin{figure*}[]
    \centering
    \includegraphics[width=\textwidth]{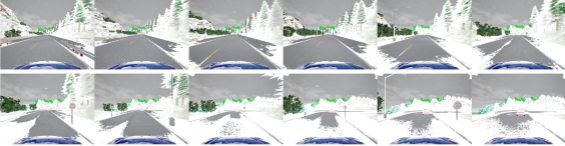}
    \caption{An OOD window from the \textbf{snowy} trace. The intensity of snow gradually increases in these OOD traces.}
    \label{fig:carla_snowy}
 \end{figure*}
\vspace{-1.5em}
\begin{figure*}[]
    \centering
    \includegraphics[width=\textwidth]{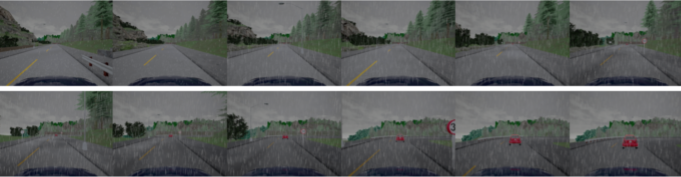}
    \caption{An OOD window from the \textbf{rainy} trace. The intensity of rain gradually increases in these OOD traces.}
    \label{fig:carla_rainy}
 \end{figure*}

\end{document}